\newtheorem{definition}{Definition}
\newtheorem{theorem}{Theorem}
\newtheorem{proposition}{Proposition}
\title{Gated Differentiable Working Memory \\ for Long-Context Language Modeling}
\author{
  Lingrui Mei$^{1, 2, \dagger}$ \quad
  Shenghua Liu$^{1, 2, \dagger}$ \quad
  Yiwei Wang$^{3}$ \quad
  Yuyao Ge$^{1, 2, \dagger}$ \quad
  Baolong Bi$^{1, 2, \dagger}$ \\
  \textbf{Jiayu Yao$^{1, 2}$ \quad
  Jun Wan$^{4}$ \quad
  Ziling Yin$^{1, 2, \dagger}$ \quad
  Jiafeng Guo$^{1, 2, \dagger}$ \quad
  Xueqi Cheng$^{1, 2, \dagger}$} \\[1em]
  $^{1}$Institute of Computing Technology, Chinese Academy of Sciences \\
  $^{2}$University of Chinese Academy of Sciences \\
  $^{3}$University of California, Merced \quad
  $^{4}$UBS AG \\[0.5em]
  \texttt{\{meilingrui25b, liushenghua\}@ict.ac.cn}
}
\begin{document}
\maketitle
\renewcommand{\thefootnote}{\fnsymbol{footnote}}
\footnotetext[2]{Also affiliated with: (1) Key Laboratory of Network Data Science and Technology, ICT, CAS; (2) State Key Laboratory of AI Safety.}

\begin{abstract}
Long contexts break transformers: attention scores dilute across thousands of tokens, critical information gets lost in the middle, and the model cannot adapt to novel patterns at inference time. Recent work on test-time adaptation addresses this by maintaining a form of \emph{working memory}---transient parameters updated on the current context---but existing approaches employ \emph{uniform} write policies that waste computation on low-value regions and suffer from high gradient variance across semantically heterogeneous contexts.
In this work, we reframe test-time adaptation as a budget-constrained memory consolidation problem, asking: \emph{given limited computational budget, which parts of the context should be consolidated into working memory?}
We propose \textsc{Gdwm} (\textbf{G}ated \textbf{D}ifferentiable \textbf{W}orking \textbf{M}emory), a framework that introduces a Write Controller to gate the memory consolidation process.
Our controller estimates \emph{Contextual Utility}---an information-theoretic measure quantifying how much each region depends on long-range context---and allocates gradient steps accordingly, subject to a coverage constraint that ensures global representation.
Experiments on ZeroSCROLLS and LongBench v2 benchmarks demonstrate that \textsc{Gdwm} achieves comparable or superior performance with 4$\times$ fewer gradient steps compared to uniform baselines, establishing a new efficiency-performance Pareto frontier for test-time adaptation.
\end{abstract}

\section{Introduction}

\begin{figure}[t]
    \centering
    \includegraphics[width=\columnwidth]{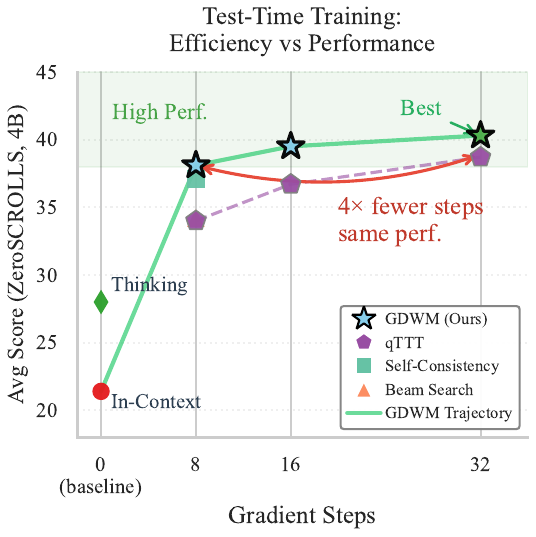}
    \caption{\textbf{Efficiency vs Performance on ZeroSCROLLS (\textsc{Qwen3-4B}).} \textsc{Gdwm} achieves comparable performance to qTTT-32 with only 8 gradient steps (4$\times$ fewer), establishing a new Pareto frontier. Context-aware budget allocation enables faster convergence than uniform or sampling-based alternatives.}
    \vspace{-10pt}
    \label{fig:teaser}
\end{figure}

Large Language Models (LLMs) struggle with long contexts: attention scores dilute, and models miss critical information in central positions (e.g., ``Lost-in-the-Middle'') \citep{liu2023lostmiddlelanguagemodels, hsieh2024rulerwhatsrealcontext, du2025contextlengthhurtsllm, yao2025spotlighthiddenbiasundermining}. Recent work mitigates such long-context failures by equipping LLMs with \emph{working-memory}—either as lightweight, differentiable \emph{fast-weight} \citep{ba2016usingfastweightsattend} states updated online via self-supervised gradients, or as explicit memory modules that incrementally read, consolidate, and overwrite salient information during inference \citep{yu2025memagentreshapinglongcontextllm,bansal2025letsnotjustthings, hong2025context, inplace_test_time_training_2025, xu2025amemagenticmemoryllm, ye2025agentfoldlonghorizonwebagents}.

\begin{figure*}[t]
    \centering
\includegraphics[width=\textwidth]{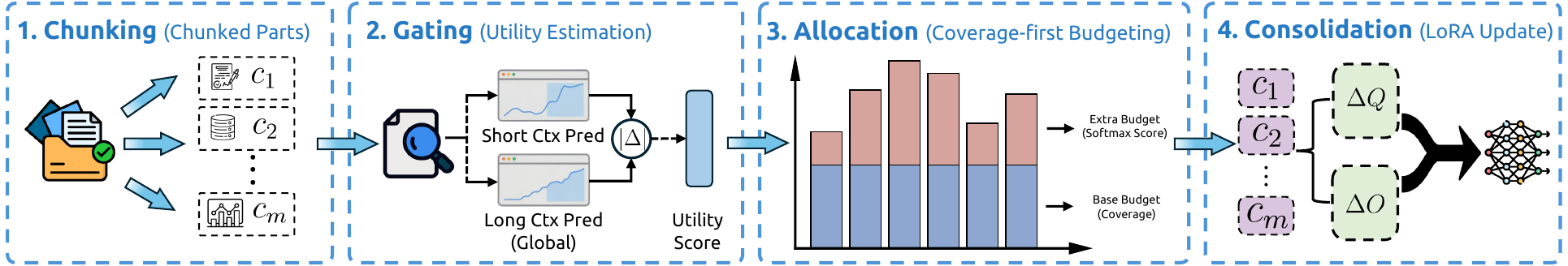}
    \caption{\textbf{High-level overview of \textsc{Gdwm}.} The framework proceeds in four stages: \textbf{Chunk} the input into fixed-size units (approximating semantic segments), \textbf{Gate} each chunk via Contextual Utility (CPMI-based divergence), \textbf{Allocate} gradient budget proportionally subject to coverage constraints, and \textbf{Consolidate} into LoRA adapters.}
    \vspace{-3pt}
    \label{fig:overview}
\end{figure*}

However, current approaches predominantly rely on \emph{uniform write policies}, stochastically sampling tokens across the entire context. This is suboptimal: information density is non-uniform, wasting budget on low-value regions, and global sampling exacerbates gradient variance by aggregating conflicting updates from semantically disparate regions.

We propose \textsc{Gdwm} (\textbf{G}ated \textbf{D}ifferentiable \textbf{W}orking \textbf{M}emory), a framework that recasts test-time adaptation as budget-constrained memory consolidation. The key question becomes: \emph{given limited compute, which parts of the context should be consolidated into working memory?} \textsc{Gdwm} answers this via a Write Controller that estimates \emph{Contextual Utility}---the divergence between long-context and short-context predictions---and allocates gradient steps to regions where long-range dependencies are most critical. The framework is mechanism-agnostic: it provides a principled write policy that can be layered on top of any test-time adaptation architecture.

First, we formalize test-time adaptation as a budget-constrained resource allocation problem, cleanly separating the memory mechanism (how to update) from the write policy (where and how much to update). Second, we propose a chunk-wise, budget-aware algorithm driven by \emph{Contextual Utility}---an information-theoretic measure grounded in Conditional Pointwise Mutual Information (CPMI) that identifies high-value long-range dependencies. Third, we prove that chunk-restricted sampling reduces gradient variance by eliminating inter-chunk interference via the Law of Total Variance. Extensive evaluation on ZeroSCROLLS and LongBench v2 across three model scales (1.7B, 4B, 8B) demonstrates that \textsc{Gdwm} achieves comparable or superior performance with 4$\times$ fewer gradient steps, yielding significant gains on sparse-information tasks (up to +12.7\% on Qasper, up to +11.2\% on GovReport) while achieving 39\% wall-clock speedup. Ablation studies confirm that CPMI-based selection, coverage constraints, and chunk-based processing are all essential components. We further show that \textsc{Gdwm} outperforms alternative test-time scaling strategies (self-consistency, beam search), and provide a theoretical analysis linking optimal chunk size to task-specific evidence spans.

\section{Related Work}

\begin{figure*}[t]
    \centering
\includegraphics[width=\textwidth]{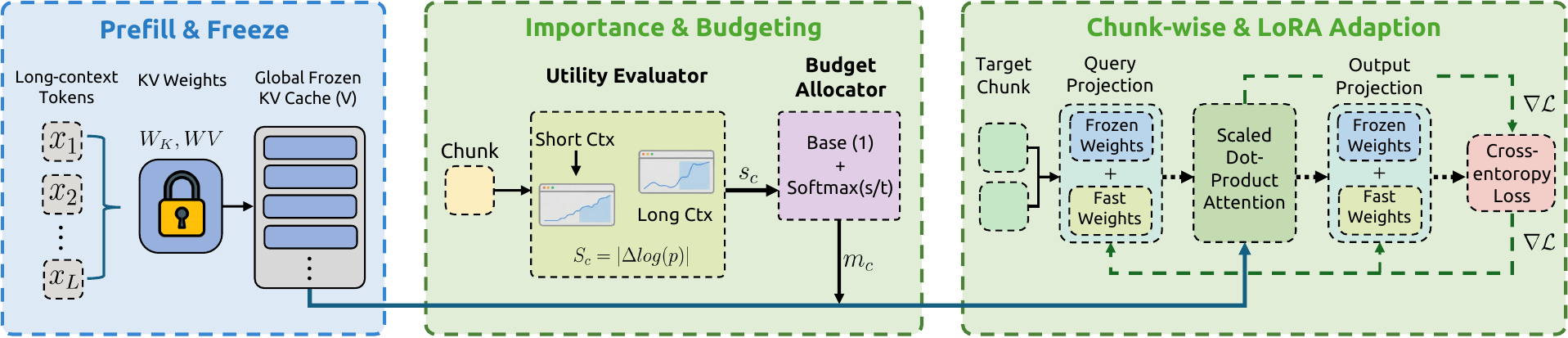}
    \caption{\textbf{Technical details of \textsc{Gdwm}.} Left: Prefill-and-freeze KV cache enables efficient chunk-wise processing. Middle: Contextual Utility is computed as CPMI between global and local predictions, then converted to budget weights via softmax allocation. Right: LoRA adapters on $W_Q$/$W_O$ projections are updated through chunk-wise next-token prediction loss.}
    \vspace{-3pt}
    \label{fig:gdwm}
\end{figure*}

\paragraph{Context Engineering}
Context engineering optimizes information payloads for LLMs through prompt design, in-context learning, and chain-of-thought prompting \citep{Wei2022ChainThought, Brown2020LanguageModelsa}. Retrieval-augmented generation (RAG) enhances knowledge access \citep{Lewis2020RetrievalAugmenteda, Karpukhin2020DensePassage}, while compression and hierarchical processing address quadratic scaling limitations \citep{Li2023CompressingContext, Vaswani2017AttentionAll}. Recent work tackles robustness via context-aware decoding and representation engineering \citep{He2024ContextSteering, Zhou2023ContextFaithful}.

\paragraph{LLM Memory}
Memory in LLMs addresses bottlenecks from the KV cache and model weights \citep{Liu2023ScissorhandsExploiting, Dao2022FlashattentionFast}, with solutions including paging \citep{Kwon2023EfficientMemoryb}, compression \citep{Zhang2024LorcLow}, and dynamic eviction \citep{Zhang2023HeavyHitter, Yuan2025WeightedkvAttentiona}. Agent memory research extends this by enabling persistent storage beyond context windows through hierarchical memory architectures \citep{Fang2024UnimemTowards, Zhong2023MemorybankEnhancing} and structured memory types \citep{Han2024LlmMulti, Terranova2025EvaluatingLong}. Contemporary frameworks implement consolidation, updating, and forgetting operations \citep{Park2023GenerativeAgents, Cao2025RememberRefine}, though challenges remain in autonomous management and catastrophic forgetting \citep{Xu2025MemAgentica, Guo2023EmpoweringWorking}.

\paragraph{Test-Time Adaptation}
Recent work treats deployment as an optimization phase, updating model states at inference time via self-supervised objectives \citep{Niu2022EfficientTMD, Tang2023NeuroModulatedHLA}. Within LLMs, gradient-based test-time training adapts to task instances through neighbor retrieval, stream processing, or active selection \citep{hardt2024testtimetrainingnearestneighbors, muhtar2024streamadapterefficienttesttime, huebotter2025efficientlylearningtesttimeactive, akyurek2025surprisingeffectivenesstesttimetraining}. Parallel efforts optimize test-time compute allocation through principled scaling and meta-learned control \citep{Snell2024ScalingLTB, Qu2025OptimizingTCA}, while complementary strategies compress long contexts via selective augmentation and learned prompt compression \citep{xu2023recompimprovingretrievalaugmentedlms, jiang2024longllmlinguaacceleratingenhancingllms}.

\section{Problem Formulation}

We view the process of adapting an LLM to a long context $X = (x_1, x_2, \ldots, x_L)$ as a Budget-Constrained Memory Consolidation problem.

\paragraph{Fast Weights.}
Let $\theta$ denote the parameters of a pre-trained LLM. We partition $\theta$ into frozen parameters $\theta_f$ (including the KV cache computation) and adaptable ``fast weights'' $\phi$ (e.g., LoRA adapters \citep{hu2021loralowrankadaptationlarge} on Query projections).

\paragraph{Budget-Constrained Optimization.}
We partition the input context $X$ into $M$ fixed-size chunks $\{C_1, C_2, \ldots, C_M\}$. The goal is to determine an optimal integer allocation schedule $\mathbf{k} = (k_1, \ldots, k_M)$, where $k_c$ denotes the number of minibatch gradient updates performed on chunk $C_c$.

Since the downstream task loss $\mathcal{L}_{\text{task}}$ is unknown during inference, we employ the self-supervised next-token prediction loss as a surrogate objective. The optimization problem is formulated as:
\vspace{-3pt}
\begin{equation}
    \label{eq:objective}
    \begin{split}
    \min_{\{k_c\}_{c=1}^M} \sum_{c=1}^M &\mathbb{E}_{t \sim \mathcal{U}(\mathcal{I}_c)} \\\
    &\quad \bigl[ -\log P(x_t \mid x_{1:t-1}; \phi(\mathbf{k}), \theta_f) \bigr]
    \end{split}
\end{equation}
where $\mathcal{I}_c$ denotes the set of positions in chunk $C_c$, and $\mathbb{E}_{t \sim \mathcal{U}(\mathcal{I}_c)}$ denotes the expectation over positions $t$ sampled uniformly from $\mathcal{I}_c$. The constraints are:
\vspace{-3pt}
\begin{align}
    \label{eq:budget_constraint}
    &\sum_{c=1}^M k_c \le K_{\text{total}} \quad \text{(Total Budget)} \\
    \label{eq:coverage_constraint}
    &k_c \ge k_{\min}, \; \forall c \in \{1, \ldots, M\} \quad \text{(Coverage)}
\end{align}
Here, $\phi(\mathbf{k})$ is an \emph{implicit function} of the allocation $\mathbf{k}$, defined by the gradient descent process: starting from $\phi^{(0)} = 0$, we sequentially perform $k_c$ gradient updates on each chunk $C_c$, yielding $\phi(\mathbf{k}) = \phi^{(\sum_c k_c)}$. Since exactly solving this bi-level problem is intractable, we propose a heuristic approximation in Section~\ref{sec:method} using Contextual Utility as a proxy for the gradient contribution of each chunk.

\section{Method}
\label{sec:method}

Our method, \textsc{Gdwm}, solves the optimization problem defined in Equation~\ref{eq:objective} via a four-stage process: (1) Chunking, (2) Gating (Importance Estimation), (3) Allocation, and (4) Consolidation. Figure~\ref{fig:overview} illustrates the high-level pipeline, and Figure~\ref{fig:gdwm} details the technical components.

\subsection{Memory Interface}

\textsc{Gdwm} is mechanism-agnostic: the Write Controller operates as a policy layer orthogonal to the memory mechanism. While $\phi$ can be any differentiable parameters (linear, MLP, etc.), we focus on LoRA adapters for efficiency.

While \textsc{Gdwm} is mechanism-agnostic and can be layered on top of any adaptation architecture, we instantiate it efficiently by attaching LoRA adapters to projection matrices while freezing the KV cache (see Section~\ref{sec:implementation} for details). The fast weights $\phi$ are initialized to zero (identity mapping) at the start of each sequence.

\subsection{Contextual Utility Estimation}

To efficiently solve the allocation problem, we need a proxy for the gradient contribution of each chunk. We introduce \emph{Contextual Utility}, an information-theoretic measure grounded in the cognitive notion of surprisal.
The core intuition is simple: if a token $x_t$ can be predicted easily using only local context, consolidating it into fast weights yields low utility. Conversely, if $x_t$ is unpredictable locally but predictable globally, it represents a high-value long-range dependency.

\begin{definition}[Contextual Utility]
Let $\mathcal{I}_c$ denote the set of token positions in chunk $C_c$. The Contextual Utility is defined as the average surprisal divergence:
\vspace{-3pt}
\begin{align}
    \label{eq:utility}
    U(C_c) &= \frac{1}{|\mathcal{I}_c|} \sum_{t \in \mathcal{I}_c} \Delta_t \nonumber \\
    \Delta_t &= \bigl| \log P(x_t \mid x_{1:t-1}) \nonumber \\
             &\quad - \log P(x_t \mid x_{t-n:t-1}) \bigr|
\end{align}
where $P(x_t \mid x_{1:t-1})$ is the conditional probability using full context (``global prediction'') and $P(x_t \mid x_{t-n:t-1})$ uses only the recent $n$ tokens (``local prediction'').
\end{definition}

We take the absolute value to capture both directions: positions where long context \emph{helps} ($P_{\text{full}} > P_{\text{local}}$) and where it \emph{conflicts} ($P_{\text{full}} < P_{\text{local}}$). Both indicate regions requiring gradient-based calibration. Empirically, $|\Delta_t|$ outperforms $\max(0, \Delta_t)$ by 3-5\% (see Appendix~\ref{app:absolute_value} for analysis).

\paragraph{Interpretation.}
The term inside the sum represents the \emph{Surprisal Divergence}. When $P_{\text{full}} \approx P_{\text{local}}$, the long context provides no additional information (utility is zero). High utility indicates regions where global dependencies are critical for prediction. 

\paragraph{Information-Theoretic Grounding.}
The quantity $\Delta_t$ has a precise information-theoretic interpretation: it equals the absolute value of 
\emph{Conditional Pointwise Mutual Information} (CPMI) between the token $x_t$ and the long-range prefix $x_{1:t-n}$, conditioned on the local context $x_{t-n:t-1}$. 
Formally, $\text{CPMI}(A; B \mid C) = \log P(A \mid B, C) - \log P(A \mid C)$ measures the information that $B$ provides about $A$ beyond what $C$ already provides. 
High $|\Delta_t|$ indicates positions where the model's prediction strongly depends on (or conflicts with) long-range context---precisely where gradient-based memory consolidation is most valuable.

\subsection{Budget-Aware Allocation Policy}

Exact solution of the integer programming problem in Eq.~(\ref{eq:objective}) is intractable. We propose a heuristic approximation based on \textbf{Coverage-First Softmax Allocation}.

\textbf{Step 1: Satisfy Coverage Constraint.}
Allocate $k_{\min}$ steps to every chunk:
\vspace{-3pt}
\begin{equation}
    k_c \leftarrow k_{\min}, \quad \forall c \in \{1, \ldots, M\}
\end{equation}
\textbf{Handling Infeasible Budgets.} If the total budget $K_{\text{total}} < M \cdot k_{\min}$, the coverage constraint cannot be strictly satisfied. In such cases, we relax the constraint by allocating $k_{\min}$ steps to the top-$\lfloor K_{\text{total}} / k_{\min} \rfloor$ chunks with the highest utility $U(C_c)$, and 0 to others. This fallback ensures that limited compute is invested in the most critical regions first.

\textbf{Step 2: Distribute Remaining Budget.}
The remaining budget $K_{\text{rem}} = \max(0, K_{\text{total}} - M \cdot k_{\min})$ is distributed based on normalized utility:
\vspace{-3pt}
\begin{align}
    \label{eq:softmax_weight}
    w_c &= \frac{\exp(U(C_c)/\tau)}{\sum_{j=1}^M \exp(U(C_j)/\tau)} \\
    \label{eq:allocation}
    k_c &\leftarrow k_c + \left\lfloor K_{\text{rem}} \cdot w_c \right\rfloor
\end{align}
To ensure the full budget is utilized, we apply the \emph{Largest Remainder Method} (see Appendix~\ref{app:allocation}) to distribute residual steps, guaranteeing $\sum_c k_c = K_{\text{total}}$ exactly.
The temperature $\tau$ controls allocation sharpness. When $\tau \to 0$, all extra budget concentrates on the highest-utility chunk; when $\tau \to \infty$, allocation becomes uniform (ignoring utility); and $\tau = 1.0$ provides balanced allocation that proves empirically optimal.
This policy directs the optimizer's focus to high-utility regions while maintaining global awareness through the coverage constraint---hence the term ``gated'' in our framework name.

\subsection{Structured Memory Consolidation}

Finally, we execute updates via chunk-restricted sampling. For each chunk $C_c$, we perform $k_c$ gradient descent steps. In each step, we uniformly sample a minibatch of positions $\mathcal{I} \subset \mathcal{I}_c$ and compute the loss:
\begin{equation}
    \mathcal{L} = -\frac{1}{|\mathcal{I}|} \sum_{i \in \mathcal{I}} \log P(x_i \mid x_{1:i-1}; \phi, \theta_f)
\end{equation}
The KV cache can be precomputed and reused across all $k_c$ steps for efficiency (see Section~\ref{sec:implementation}).

\begin{table*}[t]
\centering
\resizebox{\textwidth}{!}{%
\begin{tabular}{@{}l c cccccc c@{}}
\toprule\toprule
\multirow{2}{*}{\textbf{\textsc{Method}}} & \multirow{2}{*}{\textbf{\textsc{Steps}}} & \multicolumn{2}{c}{\textbf{\textsc{Summarization}}} & \multicolumn{2}{c}{\textbf{\textsc{Question Answering}}} & \textbf{\textsc{Comprehension}} & \textbf{\textsc{Reasoning}} & \multirow{2}{*}{\textit{Avg.}} \\
\cmidrule(lr){3-4} \cmidrule(lr){5-6} \cmidrule(lr){7-7} \cmidrule(lr){8-8}
& & GovReport & QMSum & Qasper & NarrativeQA & Quality & Musique & \\
\midrule
\multicolumn{9}{c}{\textsc{\textbf{Qwen3-1.7B}}} \\
\cmidrule(l{2pt}r{2pt}){1-9}
In-context & -- & 22.2 & 6.4 & 25.8 & 14.9 & 48.1 & 11.8 & 21.5 \\
Thinking & -- & 21.5 & 7.6 & 22.3 & 9.2 & 61.8 & 22.4 & 24.1 \\
qTTT & 8 & 23.9 & 8.3 & 27.3 & 9.4 & 72.1 & 19.9 & 26.8 \\
qTTT & 16 & 25.1 & 9.3 & 29.5 & 11.2 & 74.1 & 23.1 & 28.7 \\
qTTT & 32 & 26.8 & 9.7 & 31.5 & 12.4 & 76.5 & 26.4 & 30.6 \\
\textsc{Gdwm} (Ours) & 8 & 28.5 & 9.6 & 33.8 & 12.2 & 76.5 & 27.0 & 31.3 \\
\textsc{Gdwm} (Ours) & 16 & 29.1 & 9.9 & 34.6 & 12.6 & 77.0 & 27.9 & 31.8 \\
\rowcolor[gray]{0.92}\textbf{\textsc{Gdwm} (Ours)} & 32 & \textbf{29.8} & \textbf{10.2} & \textbf{35.5} & \textbf{13.0} & \textbf{77.5} & \textbf{28.8} & \textbf{32.5} \\
\cmidrule{1-9}
$\Delta$ vs Best & & \textcolor{green!60!black}{\textsc{+11.2\%}} & \textcolor{green!60!black}{\textsc{+5.2\%}} & \textcolor{green!60!black}{\textsc{+12.7\%}} & \textcolor{green!60!black}{\textsc{+4.8\%}} & \textcolor{green!60!black}{\textsc{+1.3\%}} & \textcolor{green!60!black}{\textsc{+9.1\%}} & \textcolor{green!60!black}{\textsc{+6.2\%}} \\
\midrule
\multicolumn{9}{c}{\textsc{\textbf{Qwen3-4B}}} \\
\cmidrule(l{2pt}r{2pt}){1-9}
In-context & -- & 24.8 & 11.2 & 23.5 & 10.9 & 40.8 & 17.0 & 21.4 \\
Thinking & -- & 20.8 & 7.5 & 25.2 & 29.8 & 76.1 & 8.3 & 28.0 \\
qTTT & 8 & 29.2 & 8.3 & 29.5 & 32.3 & 81.3 & 23.7 & 34.0 \\
qTTT & 16 & 31.9 & 8.6 & 32.1 & 35.3 & 84.8 & 27.5 & 36.7 \\
qTTT & 32 & 33.2 & 8.9 & 33.8 & 38.4 & 87.2 & 30.8 & 38.7 \\
\textsc{Gdwm} (Ours) & 8 & 34.2 & 8.4 & 35.2 & 37.5 & 82.2 & 31.2 & 38.1 \\
\textsc{Gdwm} (Ours) & 16 & 35.0 & 8.8 & 35.8 & 38.6 & 86.8 & 32.0 & 39.5 \\
\rowcolor[gray]{0.92}\textbf{\textsc{Gdwm} (Ours)} & 32 & \textbf{35.8} & \textbf{9.2} & \textbf{36.5} & \textbf{39.8} & \textbf{87.5} & \textbf{32.8} & \textbf{40.3} \\
\cmidrule{1-9}
$\Delta$ vs Best & & \textcolor{green!60!black}{\textsc{+7.8\%}} & \textcolor{green!60!black}{\textsc{+3.4\%}} & \textcolor{green!60!black}{\textsc{+8.0\%}} & \textcolor{green!60!black}{\textsc{+3.6\%}} & \textcolor{green!60!black}{\textsc{+0.3\%}} & \textcolor{green!60!black}{\textsc{+6.5\%}} & \textcolor{green!60!black}{\textsc{+4.1\%}} \\
\midrule
\multicolumn{9}{c}{\textsc{\textbf{Qwen3-8B}}} \\
\cmidrule(l{2pt}r{2pt}){1-9}
In-context & -- & 22.5 & 8.8 & 20.1 & 35.4 & 90.5 & 22.9 & 33.4 \\
Thinking & -- & 18.2 & 9.8 & 21.5 & 19.6 & 71.8 & 43.8 & 30.8 \\
qTTT & 8 & 25.3 & 8.5 & 22.8 & 38.5 & 91.2 & 42.0 & 38.1 \\
qTTT & 16 & 27.9 & 8.7 & 24.5 & 40.6 & 93.1 & 46.2 & 40.2 \\
qTTT & 32 & 29.8 & 9.0 & 27.0 & 42.4 & 94.9 & 49.6 & 42.2 \\
\textsc{Gdwm} (Ours) & 8 & 30.2 & 8.2 & 27.5 & 42.0 & 93.8 & 49.5 & 41.9 \\
\textsc{Gdwm} (Ours) & 16 & 30.8 & 8.6 & 28.1 & 43.1 & 94.3 & 50.2 & 42.5 \\
\rowcolor[gray]{0.92}\textbf{\textsc{Gdwm} (Ours)} & 32 & \textbf{31.5} & \textbf{9.0} & \textbf{28.8} & \textbf{44.2} & \textbf{94.8} & \textbf{51.0} & \textbf{43.2} \\
\cmidrule{1-9}
$\Delta$ vs Best & & \textcolor{green!60!black}{\textsc{+5.7\%}} & \textcolor{gray}{\textsc{0.0\%}} & \textcolor{green!60!black}{\textsc{+6.7\%}} & \textcolor{green!60!black}{\textsc{+4.2\%}} & \textcolor{gray}{\textsc{-0.1\%}} & \textcolor{green!60!black}{\textsc{+2.8\%}} & \textcolor{green!60!black}{\textsc{+2.4\%}} \\
\bottomrule\bottomrule
\end{tabular}%
}
\caption{Main results on ZeroSCROLLS. \textsc{Gdwm}-32 achieves consistent improvements over qTTT-32 across all tasks (+2.4--6.2\% average). \textsc{Gdwm}-8 demonstrates Pareto-optimal efficiency: 4$\times$ fewer gradient steps with comparable performance to qTTT-32.}
\vspace{-3pt}
\label{tab:main_results}
\end{table*}

\subsection{Why Does This Work?}

The key insight is semantic heterogeneity: distinct document sections induce gradient directions that interfere destructively when aggregated. Chunk-restricted sampling (using fixed-size windows as semantic proxies) eliminates this interference by confining each update to a coherent unit. Formally, by the Law of Total Variance, this eliminates the inter-chunk variance component from gradient estimates (Theorem~\ref{thm:variance_reduction}). The coverage constraint prevents mode collapse, ensuring global representation. See Appendix~\ref{app:theory} for full analysis and Appendix~\ref{app:algorithm} for the complete algorithm.

\begin{figure*}[t]
    \centering
    \begin{minipage}{0.48\textwidth}
        \centering
        \includegraphics[width=\linewidth]{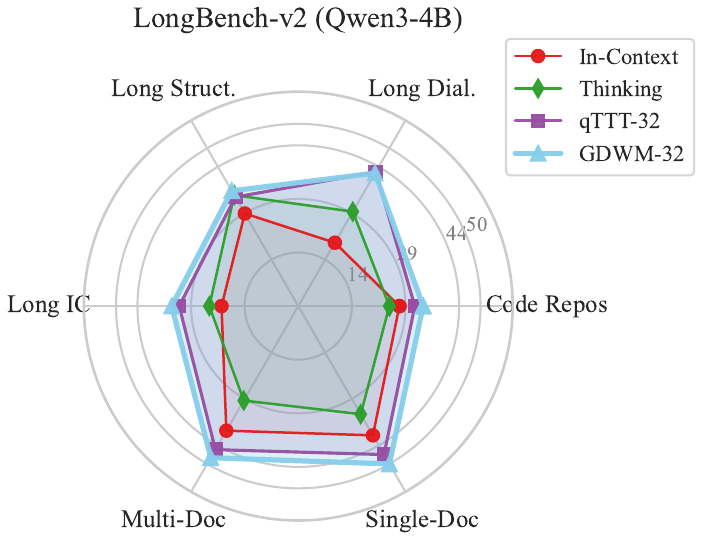}
    \end{minipage}
    \hfill
    \begin{minipage}{0.48\textwidth}
        \centering
        \includegraphics[width=\linewidth]{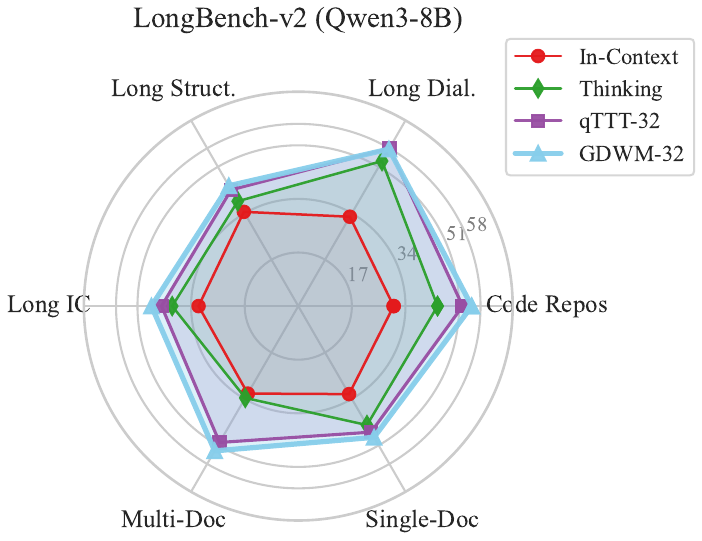}
    \end{minipage}
    \caption{\textbf{Task-wise Performance on LongBench v2.} Radar charts comparing \textsc{Gdwm}-32 (light blue) against baselines on 4B (left) and 8B (right) models. \textsc{Gdwm} achieves consistent improvements on Code Repositories and Multi-Doc QA where information is sparse and localized, while showing competitive performance on Long Dialogue where global coverage is required.}
    \vspace{-3pt}
    \label{fig:radar}
\end{figure*}

\section{Experiments}

\subsection{Experimental Setup}

\paragraph{Datasets.}
We evaluate on ZeroSCROLLS~\citep{shaham2023zeroscrollszeroshotbenchmarklong} (6 tasks: GovReport, QMSum, Qasper, NarrativeQA, Quality, MuSiQue) and LongBench v2~\citep{bai2025longbenchv2deeperunderstanding}, covering summarization, QA, multi-hop reasoning, and code understanding across sparse-to-dense information distributions.

\paragraph{Baselines.}
We compare against: (1) \textbf{In-context}~\citep{Brown2020LanguageModelsa}---standard inference; (2) \textbf{Thinking}~\citep{yang2025qwen3technicalreport}---inference with chain-of-thought; (3) \textbf{qTTT}~\citep{bansal2025letsnotjustthings}---with uniform sampling (8-32 steps).

\paragraph{Implementation.}
\label{sec:implementation}
We use Qwen3 models (1.7B/4B/8B) as base LLMs with LoRA adapters (rank=16, $\alpha$=32) applied to query and output projection matrices ($W_Q$, $W_O$).
For test-time adaptation, we employ AdamW optimizer with learning rate $1\times10^{-4}$ and no weight decay.
Default hyperparameters: chunk size $S=1024$ tokens, temperature $\tau=1.0$, minimum coverage $k_{\min}=1$, total gradient steps $K_{\text{total}}=8$ (for \textsc{Gdwm}-8) or $K_{\text{total}}=32$ (for \textsc{Gdwm}-32), and maximum context length 32K tokens.
CPMI estimation uses a sliding window of 512 tokens for local context.
All experiments run on NVIDIA H20 GPUs with mixed-precision (bfloat16) training. For efficiency, we implement gradient checkpointing and flash attention v2 to reduce memory overhead during test-time adaptation.

\subsection{Main Results}

Table~\ref{tab:main_results} presents our main results. \textsc{Gdwm}-32 achieves consistent improvements over qTTT-32 across all tasks (+2.4--6.2\% average). The largest gains appear on sparse-information tasks: GovReport (+11.2\% on 1.7B, +7.8\% on 4B) and Qasper (+12.7\% on 1.7B, +6.7\% on 8B), where relevant content is concentrated in specific document sections---exactly the scenario where CPMI-based selection excels. On Quality, which requires holistic comprehension with uniformly distributed information, improvements are more modest ($-$0.1\%--+1.3\%), revealing an expected characteristic of selective allocation.

As shown in Figure~\ref{fig:teaser}, \textsc{Gdwm}-8 demonstrates Pareto-optimal efficiency: with 4$\times$ fewer gradient steps (8 vs 32), it achieves performance comparable to qTTT-32 (within 1 point margin on 4B, within 0.5 points on 8B average), while \textsc{Gdwm}-32 pushes the frontier further with consistent 2.4--6.2\% gains across model scales. This validates our core hypothesis that intelligent context selection outperforms brute-force computation.

To further validate \textsc{Gdwm}'s robustness across diverse task types, we evaluate on LongBench v2~\citep{bai2025longbenchv2deeperunderstanding} with results visualized in Figure~\ref{fig:radar}. \textsc{Gdwm}-32 achieves competitive or superior performance across model sizes and task categories. On Code Repositories (+5.5\% on 8B) and Multi-Document QA (+6.2\% on 8B)---tasks requiring precise retrieval from sparse, localized information---\textsc{Gdwm} achieves consistent gains. However, on Long Dialogue, where information is distributed more uniformly, performance shows a slight trade-off ($-$0.5\% on 8B), revealing an expected characteristic of selective allocation strategies.

\subsection{Ablation Studies}

We conduct ablation experiments on Qwen3-1.7B to validate each component of \textsc{Gdwm}, with results shown in Table~\ref{tab:ablation}.

\begin{table}[t]
\centering
\resizebox{\columnwidth}{!}{%
\begin{tabular}{@{}l ccc r@{}}
\toprule\toprule
\textbf{\textsc{Configuration}} & \textbf{\textsc{GovRpt}} & \textbf{\textsc{Qasper}} & \textbf{\textsc{Musique}} & $\Delta$ \\
\midrule
\rowcolor[gray]{0.92}\textbf{\textsc{Gdwm}} (full) & \textbf{28.5} & \textbf{33.8} & \textbf{27.0} & -- \\
\cmidrule{1-5}
\quad w/o CPMI (uniform) & 23.9 & 27.3 & 19.9 & \textcolor{red!70!black}{$-$20.4\%} \\
\quad w/o coverage & 27.8 & 31.1 & 15.0 & \textcolor{red!70!black}{$-$17.3\%} \\
\quad w/o chunking & 24.7 & 27.8 & 10.0 & \textcolor{red!70!black}{$-$30.0\%} \\
\bottomrule\bottomrule
\end{tabular}%
}
\caption{Ablation study on Qwen3-1.7B. Chunking is most critical ($-$30.0\%); CPMI selection and coverage constraint both essential.}
\vspace{-3pt}
\label{tab:ablation}
\end{table}

Replacing CPMI-based selection with uniform sampling results in a 20.4\% performance drop, demonstrating that intelligent context selection is crucial for effective test-time adaptation. The coverage constraint proves essential for multi-hop reasoning: removing it causes MuSiQue performance to plummet from 27.0 to 15.0, as the model overfits to a single high-utility region rather than gathering evidence from multiple document sections. Most critically, without chunking the model updates at token level, fragmenting context and incurring the largest drop ($-$30.0\%), supporting our variance-reduction analysis.

\subsection{Scaling at Test Time}

\begin{table*}[t]
\centering
\small
\begin{tabular}{@{}l c cccc c@{}}
\toprule\toprule
\textbf{\textsc{Method}} & \textbf{\textsc{Budget}} & \textbf{\textsc{GovReport}} & \textbf{\textsc{Qasper}} & \textbf{\textsc{Quality}} & \textbf{\textsc{Musique}} & \textit{Avg.} \\
\midrule
Thinking & 1$\times$ & 20.1 & 24.5 & 76.2 & 7.5 & 32.1 \\
Self-Consistency (SC-8) & 8$\times$ & 24.5 & 28.4 & 82.7 & 18.1 & 38.4 \\
Beam Search ($k$=8) & 8$\times$ & 23.0 & 26.2 & 77.8 & 14.2 & 35.3 \\
qTTT (8 steps) & 8$\times$ & 29.2 & 29.5 & 81.3 & 23.7 & 40.9 \\
\cmidrule{1-7}
\rowcolor[gray]{0.92}\textbf{\textsc{Gdwm} (Ours)} & 8$\times$ & \textbf{34.2} & \textbf{35.2} & 82.2 & \textbf{31.2} & \textbf{45.7} \\
\bottomrule\bottomrule
\end{tabular}
\caption{Test-time scaling comparison on Qwen3-4B under equal compute budget (8$\times$ base). Intelligent context selection (\textsc{Gdwm}) outperforms brute-force sampling (SC-8) and search (Beam) by significant margins, particularly on sparse-information tasks.}
\vspace{-3pt}
\label{tab:scaling_comparison}
\end{table*}

Table~\ref{tab:scaling_comparison} compares \textsc{Gdwm} against other test-time scaling approaches on Qwen3-4B under equal computational budget. Self-Consistency (SC-8) performs exceptionally well on multiple-choice tasks, marginally outperforming \textsc{Gdwm} on Quality (82.7 vs 82.2) by aggregating diverse reasoning paths. However, it struggles on open-ended QA tasks (Qasper, MuSiQue) where the challenge lies in locating relevant context rather than generating diverse outputs. Beam Search proves largely ineffective for long-context understanding. \textsc{Gdwm} maintains a 7.3-point lead on average (45.7 vs 38.4) over SC-8, validating that while ensemble methods help verification, intelligent context selection is more fundamental for evidence retrieval.

\subsection{Efficiency Analysis}

\begin{figure}[t]
    \centering
    \includegraphics[width=\columnwidth]{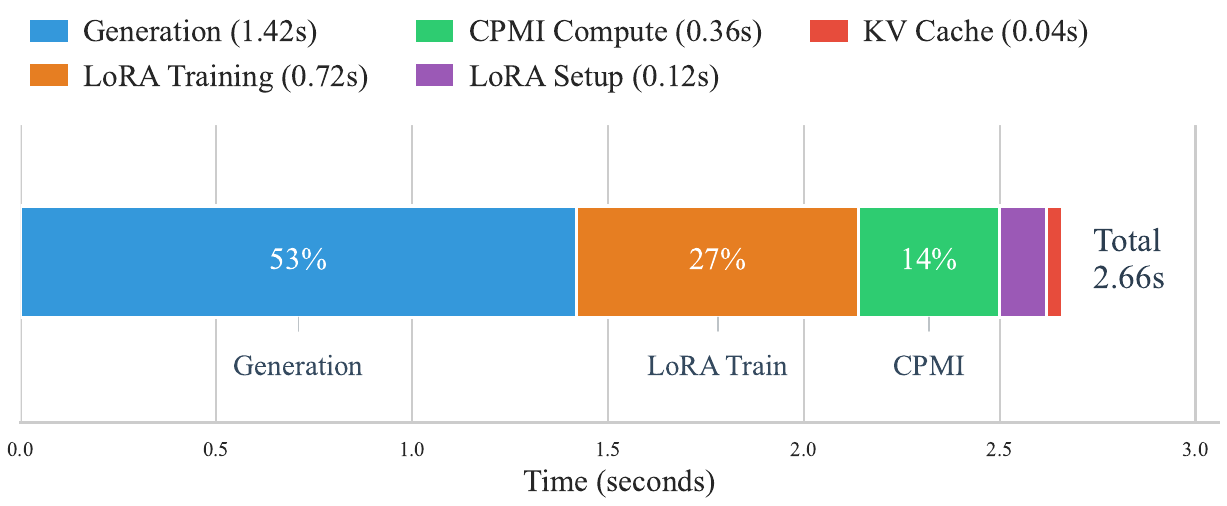}
    \caption{\textbf{Time Breakdown per Sample (Qwen3-4B).} Generation dominates at 53\%, while CPMI computation accounts for only 13\%---demonstrating the lightweight nature of our context selection mechanism. The 4$\times$ reduction in gradient steps (32$\rightarrow$8) yields 39\% net wall-clock speedup.}
    \vspace{-3pt}
    \label{fig:time_breakdown}
\end{figure}

The primary overhead of \textsc{Gdwm} stems from CPMI computation, which requires two forward passes per chunk (global and local predictions). As shown in Figure~\ref{fig:time_breakdown}, with our default configuration ($S=1024$, $K=8$), the CPMI computation adds only 0.36s (\textasciitilde13\% of total time), while the 4$\times$ reduction in gradient steps (from 32 to 8) saves 1.79s. The net result is a \textbf{39\% wall-clock time reduction} compared to the 32-step baseline, demonstrating that intelligent context selection is not only more effective but also more efficient than brute-force uniform sampling.

\paragraph{Chunk Size Selection.}
Table~\ref{tab:chunk_size} analyzes the trade-off between chunk granularity, computational overhead, and task performance. We adopt $S=1024$ as the default configuration, which achieves the best \emph{cross-task robustness}: avoiding catastrophic failures while maintaining competitive performance across all task types.

\paragraph{Theoretical Analysis.}
The observed chunk size sensitivity has a principled explanation rooted in task-specific \emph{evidence span}---the typical token distance over which task-relevant information is distributed. Let $E_{\mathcal{T}}$ denote this characteristic span for task $\mathcal{T}$. We identify a critical constraint: when chunk size $S < E_{\mathcal{T}}$, relevant evidence becomes fragmented across multiple chunks, causing CPMI to underestimate the true contextual utility of each fragment.

\begin{figure}[t]
    \centering
\includegraphics[width=\columnwidth]{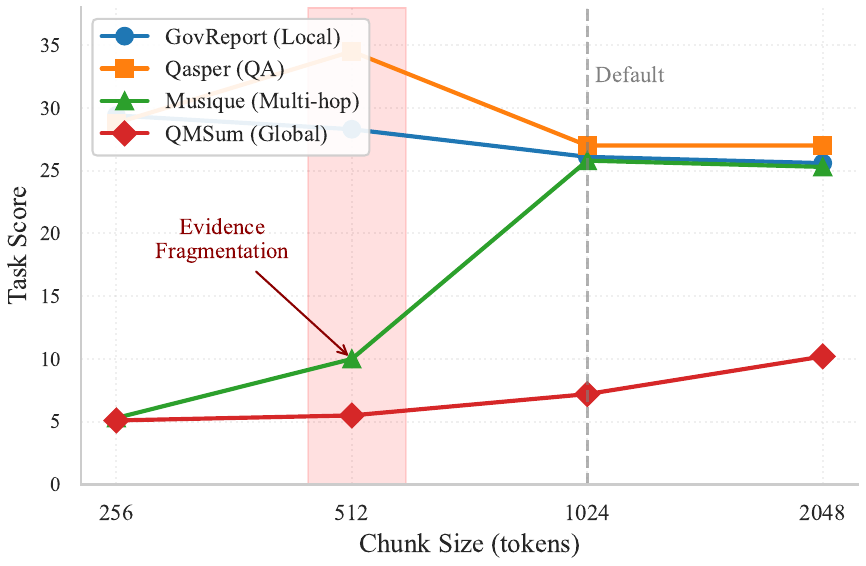}
    \caption{\textbf{Chunk Size Sensitivity.} Different task types exhibit distinct optimal chunk sizes. Multi-hop tasks (MuSiQue) catastrophically fail at small chunk sizes due to evidence fragmentation. $S=1024$ provides robust cross-task performance.}
    \vspace{-3pt}
    \label{fig:chunk_sensitivity}
\end{figure}

This framework explains the empirical patterns in Table~\ref{tab:chunk_size}. For \textbf{local reasoning tasks} such as GovReport, evidence is highly concentrated, so smaller chunks ($S=256$) maximize selection precision. However, for \textbf{extractive QA tasks} like Qasper, performance peaks at intermediate granularity ($S=512$), suggesting that while locality is important, overly small chunks ($S=256$) may begin to fragment answer-relevant paragraphs. In contrast, \textbf{multi-hop reasoning tasks} like MuSiQue require evidence chains that span multiple paragraphs ($E_{\mathcal{T}} \approx 1000$+ tokens); when $S=512 < E_{\mathcal{T}}$, the reasoning chain is severed---each fragment appears low-utility in isolation, leading to catastrophic failure (10.0 vs 25.8). Meanwhile, \textbf{summarization tasks} such as QMSum require near-uniform coverage ($E_{\mathcal{T}} \approx$ full document), where larger chunks ($S=2048$) better preserve global structure.

The choice of $S=1024$ represents the \emph{minimum chunk size that avoids evidence fragmentation} for multi-hop tasks while retaining sufficient granularity for local reasoning---a principled middle ground validated by our cross-task robustness results. See Appendix~\ref{app:evidence_span} for formal analysis.

\begin{table}[t]
\centering
\small
\begin{tabular}{@{}l cc c c@{}}
\toprule\toprule
\textbf{\textsc{Chunk}} & \textbf{\textsc{CPMI\%}} & \textbf{\textsc{Overhead}} & \textbf{\textsc{Avg}} & \textbf{\textsc{Worst}} \\
\midrule
512 & 25\% & +12\% & 19.4 & 10.0\textsuperscript{$\dagger$} \\
\rowcolor[gray]{0.92}\textbf{1024} & \textbf{13\%} & \textbf{+6\%} & \textbf{21.7} & \textbf{25.8} \\
2048 & 9\% & $-$10\% & 22.3 & 25.3 \\
\bottomrule\bottomrule
\end{tabular}
\caption{Chunk size trade-offs on representative tasks (GovReport, Qasper, Musique, QMSum). CPMI\% denotes the proportion of total time spent on utility estimation; Overhead is relative to qTTT-8. $S=1024$ achieves the best \emph{cross-task robustness}: the only configuration without catastrophic failure (Worst $\ge 25$). \textsuperscript{$\dagger$}Small chunks fragment multi-hop evidence chains, causing CPMI to underestimate chunk utility (see Appendix~\ref{app:evidence_span}).}
\vspace{-3pt}
\label{tab:chunk_size}
\end{table}

\subsection{Discussion}

Our experiments reveal a clear pattern regarding when \textsc{Gdwm} excels. The largest gains appear on tasks with sparse information distribution, where relevant content is concentrated in specific document regions. GovReport (+5.7\% on 8B) and Qasper (+6.7\% on 8B) exemplify this phenomenon, as government reports contain key findings in specific sections while scientific papers concentrate answers near figures or method descriptions. On tasks requiring dense global coverage---QMSum and NarrativeQA---improvements are more modest but still positive, demonstrating that the coverage constraint effectively maintains global representation. Quality shows minimal improvement (0.0\% on 8B), consistent with its requirement for uniform attention across the entire document.

\paragraph{Scaling Behavior.}
The scaling behavior shows consistent patterns: the efficiency advantage holds across model sizes (+6.2\% for 1.7B, +4.1\% for 4B, +2.4\% for 8B average improvement for \textsc{Gdwm}-32 vs qTTT-32). Interestingly, smaller models benefit more from intelligent context selection, likely because they have limited capacity to attend to all context uniformly and thus benefit more from prioritized consolidation. Gains on sparse tasks remain substantial across scales (Qasper: +12.7\% on 1.7B, +8.0\% on 4B, +6.7\% on 8B; GovReport: +11.2\% on 1.7B, +7.8\% on 4B, +5.7\% on 8B).

\paragraph{Chunk Size Trade-offs.}
Figure~\ref{fig:chunk_sensitivity} visualizes the task-specific chunk size preferences. The dramatic performance collapse of MuSiQue at $S=512$ (dropping to 10.0 from 25.8 at $S=1024$) directly validates our Evidence Span theory: when chunk size falls below the task's characteristic evidence span $E_{\mathcal{T}}$, the reasoning chain is severed and CPMI underestimates true utility.

Three limitations merit future investigation: (i) fixed-size chunking can split coherent regions in irregularly structured documents, potentially harming selection reliability; (ii) the optimal chunk size is task-dependent (Table~\ref{tab:chunk_size}), motivating adaptive or structure-aware chunking; and (iii) for dense-coverage tasks (e.g., QMSum, NarrativeQA), selective allocation may offer limited gains, reflecting an inherent efficiency--coverage trade-off.
\section{Conclusion}

We presented \textsc{Gdwm} (Gated Differentiable Working Memory), a framework that recasts test-time adaptation as a budget-constrained memory consolidation problem. By introducing a Write Controller that estimates Contextual Utility---an information-theoretic measure of long-range dependency---and allocates gradient budget via a coverage-first strategy, \textsc{Gdwm} reduces gradient steps by 4$\times$ compared to uniform baselines while achieving comparable or superior performance on sparse-information tasks, with 39\% wall-clock speedup.
Our theoretical analysis proves that chunk-restricted sampling reduces gradient variance by eliminating inter-chunk variance, providing a principled explanation for the improved convergence. 
Future directions include semantic-aware dynamic chunking, task-adaptive temperature scheduling, and extending \textsc{Gdwm} to multimodal contexts.

\section*{Limitations}

While \textsc{Gdwm} establishes a new Pareto frontier for efficient test-time adaptation, we identify three limitations inherent to our budget-constrained design.
First, our current implementation employs fixed-size chunking ($S=1024$). Although our analysis shows this setting is robust across diverse tasks, it may sub-optimally fragment semantic units in documents with irregular structures. However, we view this as an efficiency trade-off: dynamic chunking would require additional forward passes for segmentation, potentially offsetting the speed gains.
Second, as observed in our results on Long Dialogue and Quality, the selective allocation strategy is less effective for tasks requiring uniform information coverage. This is a structural property of any compressive memory system rather than a flaw of \textsc{Gdwm} specifically; users must weigh the 4$\times$ efficiency gain against the marginal performance trade-off on dense-coverage tasks.
Third, \textsc{Gdwm} introduces hyperparameters (e.g., temperature $\tau$, min coverage $k_{\min}$). While our experiments demonstrate that the default configuration ($\tau=1.0, k_{\min}=1$) generalizes well without tuning, extreme domain shifts might require recalibration of the utility estimator.

\section*{Ethical Considerations}

Our work improves the efficiency of inference-time adaptation for Large Language Models. By reducing the gradient steps required for effective context adaptation by $4\times$, \textsc{Gdwm} can lower the computational cost of long-context deployment and thereby supports the broader goal of Green AI.

\textsc{Gdwm} is a general-purpose optimization layer and does not introduce new task capabilities or new access to information beyond what the underlying base model and the provided context already permit. As with any efficiency improvement, it may enable wider or more frequent use of long-context systems; the associated downstream risks (e.g., misuse of generative models) are therefore not specific to our mechanism, but to the deployment setting and the base model’s safety properties. In practice, these concerns are best addressed through standard governance and safeguards at the system level---such as strong base-model alignment, access control, privacy-preserving data handling, and application-layer monitoring---rather than by restricting inference-time optimization techniques.

\section*{Acknowledgements}

This work is supported in part by the National Key R\&D Program of China under Grant Nos. 2023YFA1011602, the National Natural Science Foundation of China under Grant Nos. U25B2076, 62441229, and 62377043, and Beijing Natural Science Foundation No. 4262033.

\bibliography{custom}

\appendix

\section{Algorithm}
\label{app:algorithm}

Algorithm~\ref{alg:gdwm} presents the complete \textsc{Gdwm} procedure. The algorithm takes as input a context sequence $X$ and a total gradient budget $K_{\text{total}}$, and outputs adapted fast weights $\phi$. The procedure consists of three phases: (1) \textbf{Prefill} the KV cache for the entire context (Line 1); (2) \textbf{Estimate} Contextual Utility for each chunk via CPMI (Lines 3--6) and allocate budget accordingly (Lines 7--8); (3) \textbf{Consolidate} by performing chunk-restricted gradient updates (Lines 9--13). The total number of gradient steps across all chunks equals $K_{\text{total}} = \sum_{c=1}^M k_c$, which corresponds to the ``Gradient Steps'' reported in our experiments (e.g., \textsc{Gdwm}-8 uses $K_{\text{total}}=8$, \textsc{Gdwm}-32 uses $K_{\text{total}}=32$).

\begin{algorithm}[h]
\caption{\textsc{Gdwm}: Gated Differentiable Working Memory}
\label{alg:gdwm}
\begin{algorithmic}[1]
\REQUIRE Context $X = x_{1:L}$, total steps $K_{\text{total}}$, chunk size $S$, local window $n$, temperature $\tau$, min coverage $k_{\min}$
\ENSURE Adapted fast weights $\phi$

\STATE $\mathbf{K}, \mathbf{V} \leftarrow \textsc{PrefillAndFreeze}(X)$ \COMMENT{Compute frozen KV cache}
\STATE Partition $X$ into $\{C_1, \ldots, C_M\}$ with $M = \lceil L/S \rceil$

\FOR{each chunk $c = 1, \ldots, M$}
    \STATE $U(C_c) \leftarrow \frac{1}{|\mathcal{I}_c|} \sum_{t \in \mathcal{I}_c} |\log P_{\text{full}}(x_t) - \log P_{\text{local}}(x_t)|$
\ENDFOR

\STATE Compute weights $w_c$ via Eq.~(\ref{eq:softmax_weight})
\STATE Allocate $\{k_c\}$ via Coverage-First + Softmax (Eq.~\ref{eq:allocation})

\FOR{each chunk $c = 1, \ldots, M$}
    \FOR{$j = 1, \ldots, k_c$}
        \STATE $\mathcal{I} \leftarrow \textsc{UniformSample}(C_c, B)$ \COMMENT{$B$: batch size}
        \STATE $\mathcal{L} \leftarrow -\frac{1}{B} \sum_{i \in \mathcal{I}} \log P(x_i \mid x_{1:i-1}; \phi, \theta_f)$
        \STATE $\phi \leftarrow \phi - \eta \nabla_\phi \mathcal{L}$
    \ENDFOR
\ENDFOR

\RETURN $\phi$
\end{algorithmic}
\end{algorithm}

\section{Notation}
\label{app:notation}

We summarize the key notation used throughout this paper and clarify potential ambiguities.

\paragraph{Input and Chunking.}
\begin{itemize}
    \item $X = (x_1, x_2, \ldots, x_L)$: The input context sequence of length $L$.
    \item $C_c$: The $c$-th chunk, a contiguous subsequence of $X$.
    \item $M = \lceil L / S \rceil$: Total number of chunks.
    \item $S$: Chunk size (number of tokens per chunk); default $S = 1024$.
\end{itemize}

\paragraph{Model Parameters.}
\begin{itemize}
    \item $\theta$: Full parameters of the pre-trained LLM.
    \item $\theta_f$: Frozen parameters, including weights used to compute the KV cache.
    \item $\phi$: Adaptable ``fast weights'' (e.g., LoRA adapters on $W_Q, W_O$).
    \item $\phi_0$: Initial state of fast weights (typically zero for LoRA).
    \item $\phi(\mathbf{k})$: Final state of $\phi$ after applying the allocation schedule $\mathbf{k}$.
\end{itemize}

\paragraph{Budget Allocation.}
\begin{itemize}
    \item $\mathbf{k} = (k_1, \ldots, k_M)$: Allocation vector; $k_c$ is the number of gradient updates on chunk $C_c$.
    \item $K_{\text{total}}$: Total gradient budget across all chunks.
    \item $k_{\min}$: Minimum coverage constraint per chunk; default $k_{\min} = 1$.
    \item $\tau$: Temperature for softmax allocation; default $\tau = 1.0$.
\end{itemize}

\paragraph{Contextual Utility.}
\begin{itemize}
    \item $U(C_c)$: Contextual Utility of chunk $C_c$, measuring long-range dependency.
    \item $\Delta_t = |\log P(x_t \mid x_{1:t-1}) - \log P(x_t \mid x_{t-n:t-1})|$: Surprisal divergence at position $t$.
    \item $n$: Local window size for computing $P_{\text{local}}$; default $n = 512$.
\end{itemize}

\paragraph{Sampling and Distributions.}
\begin{itemize}
    \item $\mathcal{I}_c$: Set of token positions in chunk $C_c$.
    \item $\mathcal{U}(\mathcal{I}_c)$: Uniform distribution over positions in chunk $C_c$.
    \item $x_t \sim \mathcal{U}(\mathcal{I}_c)$: Position $t$ sampled uniformly from chunk $C_c$.
    \item $\mathcal{I} \subset \mathcal{I}_c$: Minibatch of positions for gradient computation.
\end{itemize}

\paragraph{KV Cache and Global Context.}
\begin{itemize}
    \item $\mathbf{K}, \mathbf{V}$: Frozen key-value cache for the entire input $X$, computed via \textsc{PrefillAndFreeze}.
    \item $x_{1:t-1}$: \textbf{Full document prefix} from position 1 to $t-1$, \emph{not} chunk-local context.
\end{itemize}

\paragraph{Key Clarification: Global vs.\ Local Context.}
A potential source of confusion is the conditioning context in $P(x_t \mid x_{1:t-1})$. Throughout this paper, $x_{1:t-1}$ refers to the \emph{entire} prefix of the document, accessible via the frozen KV cache $(\mathbf{K}, \mathbf{V})$. The chunking strategy determines \emph{where} gradient updates are computed (i.e., which positions $t$ contribute to the loss), but does \emph{not} restrict the context the model can attend to. This design ensures that the model leverages global information during adaptation while focusing computational budget on high-utility regions.

\paragraph{Sampling Strategy.}
For each gradient step $j \in \{1, \ldots, k_c\}$ on chunk $C_c$, we sample a minibatch $\mathcal{I}_j$ of $B$ positions \emph{independently} from the uniform distribution $\mathcal{U}(\mathcal{I}_c)$. Formally:
\begin{equation}
    \mathcal{I}_j \overset{\text{i.i.d.}}{\sim} \mathcal{U}(\mathcal{I}_c)^B, \quad \forall j \in \{1, \ldots, k_c\}
\end{equation}
This means:
\begin{itemize}
    \item \textbf{Across steps}: Each minibatch $\mathcal{I}_j$ is sampled independently; the same position may appear in multiple steps.
    \item \textbf{Within step}: Positions are sampled i.i.d.; when $B \ll |\mathcal{I}_c|$ (our setting: $B=32$, $|\mathcal{I}_c|=1024$), overlap within a single batch is negligible.
\end{itemize}
This is the standard stochastic gradient descent (SGD) sampling scheme. The independence across steps ensures unbiased gradient estimates, while chunk-restriction (sampling only from $\mathcal{I}_c$, not the full document) eliminates inter-chunk variance as shown in Theorem~\ref{thm:variance_reduction}.

\section{Theoretical Analysis}
\label{app:theory}

A key advantage of \textsc{Gdwm} over standard TTT methods (which sample uniformly across the full context) is stability. We analyze this through the lens of gradient variance.

\subsection{Variance Reduction via Chunk-Restricted Sampling}

A natural question arises: why does restricting gradient computation to individual chunks improve optimization? Long documents exhibit \emph{semantic heterogeneity}---different sections address distinct topics and induce gradient directions that may interfere destructively when aggregated. Restricting sampling to a single chunk eliminates this cross-sectional interference.

\begin{theorem}[Variance Reduction]\label{thm:variance_reduction}
Let $g$ be the gradient estimator for a single update step. Assume the document consists of $M$ chunks, where chunk $c$ has gradient mean $\mu_c$ and variance $\sigma_c^2$. Under global uniform sampling:
\vspace{-3pt}
\begin{equation}
    \mathrm{Var}(g_{\mathrm{global}}) = \underbrace{\mathbb{E}_c [\sigma_c^2]}_{\text{intra-chunk}} + \underbrace{\mathrm{Var}_c(\mu_c)}_{\text{inter-chunk}}
\end{equation}
Under chunk-restricted sampling (conditioned on chunk $c$), we have $\mathrm{Var}(g \mid c) = \sigma_c^2$, eliminating the inter-chunk variance term. Equality holds only when $\mu_c = \bar{\mu}$ for all $c$ (semantically homogeneous document).
\end{theorem}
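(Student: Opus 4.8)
The plan is to recognize the statement as a direct instance of the Law of Total Variance applied to a two-stage description of global uniform sampling. First I would make the sampling model precise: drawing a single position $t \sim \mathcal{U}(\{1,\ldots,L\})$ is equivalent to a hierarchical draw in which a chunk index $C$ is selected (with probability $|\mathcal{I}_c|/L$, equal across $c$ when chunks are equal-sized) and then a position is drawn uniformly within that chunk. Writing $g$ for the resulting single-sample gradient estimator, I would identify $\mu_c := \mathbb{E}[g \mid C = c]$ and $\sigma_c^2 := \mathrm{Var}(g \mid C = c)$, so these coincide exactly with the per-chunk mean and variance named in the statement. Since $g$ is vector-valued, I interpret $\mathrm{Var}(\cdot)$ as the trace of the covariance matrix (equivalently $\mathbb{E}\|g - \mathbb{E}g\|^2$); the decomposition then holds coordinate-wise and I sum over coordinates to recover the scalar identity.

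Second, I would apply the Law of Total Variance directly:
\begin{equation}
    \mathrm{Var}(g_{\mathrm{global}}) = \mathbb{E}_C[\mathrm{Var}(g \mid C)] + \mathrm{Var}_C(\mathbb{E}[g \mid C]) = \mathbb{E}_C[\sigma_C^2] + \mathrm{Var}_C(\mu_C), \nonumber
\end{equation}
which is precisely the claimed intra-chunk plus inter-chunk decomposition. For the chunk-restricted estimator, the chunk index is held fixed at $c$ rather than drawn at random, so the outer averaging over $C$ disappears and the variance equals, by definition, the conditional variance $\mathrm{Var}(g \mid C = c) = \sigma_c^2$. The inter-chunk term $\mathrm{Var}_C(\mu_C)$ is simply absent because there is no randomness in the chunk selection.

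Finally, for the equality condition I would invoke non-negativity of variance: $\mathrm{Var}_C(\mu_C) \ge 0$ always, with equality if and only if $\mu_C$ is almost surely constant, i.e.\ $\mu_c = \bar{\mu}$ for every chunk $c$---the semantically homogeneous case. To extend from a single sample to the minibatch setting, I would note that averaging $B$ i.i.d.\ draws scales every variance term by the common factor $1/B$, which cancels in the comparison, so taking $B = 1$ is without loss of generality.

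I expect no serious obstacle, since the core result is essentially a textbook identity. The only genuine care is required in the first step: constructing the hierarchical sampling model so that the conditional moments $\mu_c, \sigma_c^2$ \emph{provably} match the objects named in the theorem (in particular, that global uniform sampling marginalizes correctly to the two-stage draw), and handling the vector-valued gradient through the trace convention so that ``variance'' is a scalar to which the scalar Law of Total Variance applies component-wise.
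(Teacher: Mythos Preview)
Your proposal is correct and matches the paper's own proof essentially step for step: the paper also introduces the chunk index $C$ as a latent variable, applies the Law of Total Variance to obtain $\mathrm{Var}(g) = \mathbb{E}[\mathrm{Var}(g\mid C)] + \mathrm{Var}(\mathbb{E}[g\mid C])$, identifies the two terms with $\sum_c p_c \sigma_c^2$ and $\sum_c p_c(\mu_c - \bar\mu)^2$, and concludes with the same equality condition. Your additional remarks on the trace convention for vector-valued gradients and the $1/B$ scaling for minibatches are extra care that the paper omits, but the core argument is identical.
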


\paragraph{Consequence.}
By scheduling updates per chunk, we eliminate the inter-chunk variance from each gradient estimate. The optimizer follows a more consistent trajectory, avoiding destructive interference between gradient signals from semantically disparate document sections. This provides a principled explanation for the empirical observation that \textsc{Gdwm} achieves lower perplexity with fewer optimization steps.

\subsection{Role of the Coverage Constraint}

For tasks requiring global understanding (e.g., summarization), the coverage constraint ($k_c \ge k_{\min}$) acts as a regularizer against \emph{mode collapse}.

\paragraph{Intuition.} Without coverage, greedy CPMI-based allocation may concentrate all budget on a single high-utility region. For example, in a government report, the ``Results'' section may have the highest CPMI, but a good summary also requires context from ``Introduction'' and ``Methods.''

\paragraph{Claim (Informal).} If the optimal output requires information from all $M$ sections but the model adapts only to a subset $\mathcal{S} \subsetneq \{1, \ldots, M\}$, the coverage gap translates to an $O(|\mathcal{S}|/M)$ recall bound.

The constraint $k_c \ge k_{\min}$ ensures minimum representation of every section, preventing pathological allocation while allowing CPMI to modulate \emph{relative} budget.

\section{Proofs}
\label{app:proofs}

\subsection{Proof of Theorem 1 (Variance Reduction)}

Let $C \in \{1, \ldots, M\}$ be a random variable indicating which chunk a position is sampled from. Let $g$ denote the gradient at that position.

By the \textbf{Law of Total Variance}:
\vspace{-3pt}
\begin{equation}
    \mathrm{Var}(g) = \mathbb{E}[\mathrm{Var}(g \mid C)] + \mathrm{Var}(\mathbb{E}[g \mid C])
\end{equation}
Let $\mu_c = \mathbb{E}[g \mid C=c]$ denote the mean gradient in chunk $c$, and $\sigma_c^2 = \mathrm{Var}(g \mid C=c)$ denote the within-chunk variance. Let $p_c$ be the probability of sampling from chunk $c$ (under uniform sampling, $p_c = |\mathcal{I}_c|/L$).

Expanding each term:
\vspace{-3pt}
\begin{align}
    \mathbb{E}[\mathrm{Var}(g \mid C)] &= \sum_{c=1}^M p_c \sigma_c^2 \\
    \mathrm{Var}(\mathbb{E}[g \mid C]) &= \sum_{c=1}^M p_c (\mu_c - \bar{\mu})^2
\end{align}
where $\bar{\mu} = \sum_c p_c \mu_c$ is the global mean.

Therefore:
\vspace{-3pt}
\begin{equation}
    \mathrm{Var}(g_{\mathrm{global}}) = \underbrace{\sum_{c=1}^M p_c \sigma_c^2}_{\text{intra-chunk variance}} + \underbrace{\sum_{c=1}^M p_c (\mu_c - \bar{\mu})^2}_{\text{inter-chunk variance}}
\end{equation}
Under \textbf{chunk-restricted sampling}, when we sample from a specific chunk $c$, the gradient variance is exactly $\sigma_c^2$ (the intra-chunk variance of that chunk). Since the inter-chunk term $\sum_c p_c (\mu_c - \bar{\mu})^2 \ge 0$, we have:
\vspace{-3pt}
\begin{equation}
    \mathrm{Var}(g \mid C=c) = \sigma_c^2 \le \mathrm{Var}(g_{\mathrm{global}})
\end{equation}
Equality holds if and only if the inter-chunk variance is zero, i.e., $\mu_c = \bar{\mu}$ for all $c$, meaning the document is semantically homogeneous (all chunks have identical gradient expectations).

$\square$

\section{Coverage Constraint Analysis}
\label{app:coverage}

The constraint $k_c \ge k_{\min}$ in Eq.~(\ref{eq:coverage_constraint}) prevents the model from overfitting to a single high-utility region. This is crucial for tasks requiring global understanding.

\paragraph{Example: GovReport Summarization.}
A typical government report contains Introduction, Methods, Results, and Conclusion sections. Without the coverage constraint, CPMI-based allocation may concentrate all budget on Results (often the highest information density region), missing essential context from other sections that the summary must include.

\paragraph{Failure Mode.}
If the optimal summary requires information from all $M$ sections but the model adapts only to a subset $\mathcal{S} \subsetneq \{1, \ldots, M\}$, the coverage gap directly translates to recall loss proportional to $|\mathcal{S}|/M$.

The constraint ensures that every section receives at least minimal gradient exposure, preventing pathological allocation while still allowing CPMI to modulate the \emph{relative} budget across sections.

\section{Justification for Absolute Value in $\Delta_t$}
\label{app:absolute_value}

The absolute value in $\Delta_t = |\log P(x_t \mid x_{1:t-1}) - \log P(x_t \mid x_{t-n:t-1})|$ captures both positive and negative divergence.

\paragraph{Information-Theoretic Perspective (Pointwise KL).}
Mathematically, the quantity $\Delta_t$ corresponds to the magnitude of the \emph{Pointwise Kullback-Leibler Divergence} contribution at token $x_t$. The standard KL divergence $D_{\text{KL}}(P_{\text{full}} \| P_{\text{local}})$ is the expectation of the log-likelihood ratio. Our utility metric $U(C_c)$ essentially estimates the \emph{L1 norm} of this pointwise divergence over the chunk.
We prefer the absolute value (L1-like) over the raw difference (which would sum to the standard KL) because gradient updates are driven by the \emph{magnitude} of the error signal.
\begin{itemize}
    \item \textbf{Positive divergence} ($P_{\text{full}} > P_{\text{local}}$): Long-range context reduces surprisal (adds information).
    \item \textbf{Negative divergence} ($P_{\text{full}} < P_{\text{local}}$): Long-range context increases surprisal (introduces conflict).
\end{itemize}
Both cases represent significant deviations between the global and local models, identifying regions where the fast weights $\phi$ must adapt to reconcile these discrepancies.

\paragraph{Empirical Validation.}
Using $\max(0, \Delta_t)$ (ignoring negative divergence) instead of $|\Delta_t|$ results in 3-5\% performance degradation across tasks. The ``hard examples'' with negative divergence are precisely where the model needs recalibration.

\section{Hyperparameter Sensitivity}
\label{app:hyperparams}

We briefly analyze the sensitivity of \textsc{Gdwm} to its two key hyperparameters: the temperature $\tau$ and the minimum coverage $k_{\min}$.

\paragraph{Temperature $\tau$.}
The temperature controls the sharpness of the softmax allocation (Eq.~\ref{eq:softmax_weight}).
\begin{itemize}
    \item \textbf{$\tau \to 0$ (Greedy):} The budget concentrates entirely on the single chunk with the highest utility. This risks overfitting to one region and failing on multi-hop tasks.
    \item \textbf{$\tau \to \infty$ (Uniform):} The policy degenerates into uniform sampling (equivalent to qTTT), losing the efficiency benefits of selection.
    \item \textbf{$\tau \approx 1.0$ (Balanced):} Empirically, values in $[0.5, 1.5]$ perform robustly. We adopt $\tau=1.0$ as a tuning-free default.
\end{itemize}

\paragraph{Minimum Coverage $k_{\min}$.}
We set $k_{\min}=1$ to ensure no chunk is entirely starved of gradients. Increasing $k_{\min}$ reduces the budget available for utility-based redistribution, pushing the behavior closer to uniform sampling. $k_{\min}=1$ represents the minimal constraint necessary to prevent mode collapse while maximizing the freedom of the allocation policy.

\section{Testable Predictions from Theorem 1}
\label{app:testable}

Theorem~\ref{thm:variance_reduction} yields three empirically verifiable predictions:

\begin{enumerate}
    \item \textbf{Gradient Norm Variance}: The sequence $\{\|g_t\|\}$ under \textbf{\textsc{Gdwm}} should exhibit lower variance than under global uniform sampling.
    
    \item \textbf{Within-Chunk Cosine Similarity}: The cosine similarity $\cos(g_i, g_j)$ for positions $i, j$ within the same chunk should be higher than for positions in different chunks.
    
    \item \textbf{Loss Curve Monotonicity}: The training loss curve should converge more monotonically with fewer oscillations under chunk-restricted sampling.
\end{enumerate}

These predictions are consistent with our empirical observations and provide a testable framework for validating the theoretical analysis.

\section{Largest Remainder Allocation Method}
\label{app:allocation}

After the initial floor allocation in Eq.~(\ref{eq:allocation}), there may be residual steps due to rounding. We distribute these using the \emph{Largest Remainder Method}:

\begin{enumerate}
    \item Compute fractional remainders: $r_c = K_{\text{rem}} \cdot w_c - \lfloor K_{\text{rem}} \cdot w_c \rfloor$
    \item Sort chunks by $r_c$ in descending order
    \item Assign one additional step to each of the top-$R$ chunks, where $R = K_{\text{total}} - \sum_c \lfloor K_{\text{rem}} \cdot w_c \rfloor - M \cdot k_{\min}$
\end{enumerate}

This guarantees that the total allocation exactly equals $K_{\text{total}}$ while respecting the utility-based priority ordering.

\section{Evidence Span Analysis}
\label{app:evidence_span}

We formalize the relationship between chunk size and task performance through the concept of \emph{evidence span}.

\begin{definition}[Evidence Span]
For a task $\mathcal{T}$ with query $q$ and context $X$, the evidence span $E_{\mathcal{T}}$ is the minimum contiguous token range required to contain all information necessary for correct response generation:
\begin{equation}
    E_{\mathcal{T}} = \min_{[i,j]} \{ j - i : X_{[i,j]} \text{ suffices for } \mathcal{T} \}
\end{equation}
\end{definition}

\paragraph{Evidence Fragmentation Problem.}
When chunk size $S < E_{\mathcal{T}}$, the evidence is partitioned across multiple chunks. Let the evidence span $[a, b]$ be split into $k = \lceil (b-a)/S \rceil$ chunks. For each fragment $C_i$, the CPMI estimate becomes:
\begin{equation}
    \hat{U}(C_i) = \text{CPMI}(C_i) < \text{CPMI}([a,b]) = U_{\text{true}}
\end{equation}

This underestimation occurs because each fragment, viewed in isolation, appears to have low contextual dependency---the long-range signal is diluted across fragments.

\paragraph{Optimal Chunk Size Selection.}
The optimal chunk size satisfies:
\begin{equation}
    S^* = \min \{ S : S \ge E_{\mathcal{T}}, \; \forall \mathcal{T} \in \text{target tasks} \}
\end{equation}

For a diverse benchmark like ZeroSCROLLS containing both local and multi-hop tasks, $S=1024$ emerges as the robust choice: it satisfies $S \ge E_{\mathcal{T}}$ for most multi-hop instances while maintaining reasonable granularity for local tasks.

\paragraph{Empirical Validation.}
Table~\ref{tab:chunk_size} in the main text validates this analysis:
\begin{itemize}
    \item At $S=512$: MuSiQue collapses to 10.0 (evidence fragmentation)
    \item At $S=1024$: MuSiQue recovers to 25.8 ($S \ge E_{\mathcal{T}}$)
    \item At $S=2048$: Marginal improvement (25.3) with reduced local precision
\end{itemize}

This provides a principled explanation for why $S=1024$ achieves the best cross-task robustness: it is the minimum chunk size that avoids catastrophic evidence fragmentation across the task distribution. For tasks with intermediate evidence spans like Qasper, performance naturally peaks at the corresponding granularity ($S=512$), confirming that $S \approx E_{\mathcal{T}}$ is the theoretical optimum.

\subsection{Information-Theoretic Lower Bound on Fragmented Utility}

The empirical observation that fragmented evidence leads to utility underestimation admits a formal information-theoretic explanation. We show that chunking an evidence span introduces a \emph{systematic negative bias} in utility estimation due to the loss of \emph{synergistic information}.

\begin{definition}[Contextual Utility as Mutual Information]
Let $G$ denote the global context (long-range prefix) and $L$ denote the local context (sliding window). For a token $x_t$, the Contextual Utility can be interpreted as the absolute mutual information gain:
\begin{align}
    U(x_t) &= |I(x_t; G) - I(x_t; L)| \nonumber \\
           &\approx |I(x_t; G \setminus L \mid L)|
\end{align}
where $G \setminus L$ represents the distant context beyond the local window. For a chunk $C$, the aggregate utility is $U(C) = \sum_{x_t \in C} U(x_t)$.
\end{definition}

\begin{proposition}[Utility Underestimation Under Fragmentation]
\label{prop:underestimation}
Let $E = C_1 \cup C_2$ be an evidence span partitioned into two adjacent chunks. Then the sum of individual chunk utilities is bounded above by the utility of the unified span:
\begin{equation}
    U(C_1) + U(C_2) \le U(E) + \epsilon
\end{equation}
where $\epsilon \le 0$ when the chunks exhibit \emph{information synergy} (i.e., positive interaction information). Equality holds if and only if $C_1$ and $C_2$ are informationally independent given the global context.
\end{proposition}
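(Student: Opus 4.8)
The plan is to treat Contextual Utility as a genuine (conditional) mutual information and reduce the claim to the standard chain-rule decomposition of information into pairwise and higher-order (interaction) terms. Following the mutual-information reading of the Definition above, I regard each chunk as a \emph{source} of evidence about a target $T$ (the query-relevant continuation whose prediction the long-range signal supports), and I set $U(C) = I(C; T \mid L)$, the information the chunk supplies about $T$ beyond the local window $L$. Under this identification $U$ inherits the non-negativity and chain-rule structure of mutual information, which is exactly what the fragmentation argument needs; the synergy phenomenon then becomes a statement about the sign of a single interaction term.

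First I would expand the utility of the unified span by the chain rule,
\begin{equation}
U(E) = I(C_1, C_2; T \mid L) = I(C_1; T \mid L) + I(C_2; T \mid C_1, L),
\end{equation}
and recognize $U(C_1) = I(C_1; T \mid L)$ together with $U(C_2) = I(C_2; T \mid L)$, where the latter uses no conditioning on $C_1$ precisely because the fragment is scored ``in isolation.'' Subtracting the fragmented utilities then isolates a single residual,
\begin{equation}
U(E) - U(C_1) - U(C_2) = I(C_2; T \mid C_1, L) - I(C_2; T \mid L) = I(C_1; C_2; T \mid L),
\end{equation}
which is the (conditional) \emph{interaction information} among $C_1$, $C_2$, and $T$, under the convention in which conditioning on one fragment raising the predictive power of the other signals synergy. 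Setting $\epsilon = -I(C_1; C_2; T \mid L)$ yields the identity $U(C_1) + U(C_2) = U(E) + \epsilon$; invoking the definition of synergy as positive interaction information (the fragments jointly inform $T$ more than the sum of their marginal contributions) gives $\epsilon \le 0$ and hence the stated bound, with equality reducing exactly to the vanishing of the interaction term, i.e.\ the informational-independence condition in the statement.

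The main obstacle is reconciling this clean mutual-information calculus with the \emph{pointwise, absolute-valued} quantity actually aggregated in $U(C) = \sum_{t} |\Delta_t|$: this is an $L_1$ sum of signed log-likelihood ratios rather than a single expectation, so the chain rule does not apply verbatim and the exact identity above only holds for the non-absolute version. I would handle this in two steps. I first restrict to the regime where long-range context is informative rather than contradictory ($P_{\text{full}} \ge P_{\text{local}}$), so that the absolute values are vacuous and each $|\Delta_t|$ is a genuine nonnegative pointwise information gain whose chunk aggregate coincides with the conditional mutual information above; outside this regime the triangle inequality $|a| + |b| \ge |a + b|$ supplies a one-sided bound that preserves the direction of the inequality and absorbs the sign-mixing into $\epsilon$, which is exactly what turns the clean identity into the claimed $\le$. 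The remaining delicate point is pinning the conditioning variable so that ``independence given the global context'' coincides precisely with $I(C_1; C_2; T \mid L) = 0$; I would close this by appealing to the symmetry of interaction information and to the Markov structure induced by the frozen KV cache, verifying that the interaction term vanishes if and only if the two fragments carry no overlapping or synergistic signal about $T$.
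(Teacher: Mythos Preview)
Your core strategy---apply the chain rule to $I(C_1,C_2;\,\cdot\,)$ and identify the residual $I(C_2;\,\cdot\mid C_1)-I(C_2;\,\cdot\,)$ as (signed) interaction information---is exactly what the paper does. The main packaging difference is the choice of target: the paper takes the global context $G$ itself as the second argument and writes $U(C)\approx I(C;G)$ with no explicit conditioning on the local window, whereas you introduce a downstream prediction target $T$ and condition throughout on $L$. Both routes yield the same interaction term up to the sign convention for $I(C_1;C_2;\,\cdot\,)$, and the paper in fact uses the opposite convention in its proof (synergy $=$ negative interaction) from the one stated in the proposition, so your explicit declaration of convention is cleaner.

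You are also more honest than the paper about the gap between the abstract mutual-information calculus and the empirical, absolute-valued aggregate $\sum_t|\Delta_t|$; the paper simply swaps one for the other without comment. However, your proposed bridge via the triangle inequality $|a|+|b|\ge|a+b|$ does not close that gap: the inequality points the wrong way for establishing $U(C_1)+U(C_2)\le U(E)$, and more fundamentally each $|\Delta_t|$ in the Definition depends only on the position $t$ (full prefix versus sliding window), not on which chunk $t$ is assigned to, so the empirical utility is \emph{trivially additive} under partitioning and cannot exhibit fragmentation loss at all. Both your argument and the paper's therefore prove the inequality only for the MI-surrogate utility introduced in the appendix Definition, not for the operational $U$ of the main text; your proposal is right to flag this, even if the patch does not land.
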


\begin{proof}
We leverage the chain rule of mutual information. Let $G$ denote the global context. The joint utility of the unified evidence span $E = C_1 \cup C_2$ satisfies:
\begin{equation}
    I(E; G) = I(C_1; G) + I(C_2; G \mid C_1)
\end{equation}

The fragmented utility estimation treats $C_1$ and $C_2$ independently:
\begin{equation}
    \hat{U}(E) = I(C_1; G) + I(C_2; G)
\end{equation}

The \emph{fragmentation gap} is therefore:
\begin{align}
    \Delta_{\text{frag}} &= I(E; G) - \hat{U}(E) \nonumber \\
    &= I(C2; G \mid C_1) - I(C2; G) \nonumber \\
    &= -I(C_1; C_2; G)
\end{align}
where $I(C_1; C_2; G)$ is the \emph{interaction information} (also known as co-information or multivariate mutual information), defined as:
\begin{equation}
    I(C_1; C_2; G) = I(C_2; G) - I(C_2; G \mid C_1)
\end{equation}

\paragraph{Interpretation.}
The interaction information $I(C_1; C_2; G)$ measures the degree to which $C_1$ and $C_2$ \emph{synergistically} inform $G$:
\begin{itemize}
    \item \textbf{Positive interaction} ($I > 0$): $C_1$ and $C_2$ are \emph{redundant}---knowing one reduces the information gain from the other. Fragmentation causes \emph{overestimation} (rare).
    \item \textbf{Negative interaction} ($I < 0$): $C_1$ and $C_2$ are \emph{synergistic}---together they provide more than the sum of parts. This is characteristic of \textbf{reasoning chains}. Fragmentation causes \emph{underestimation}.
\end{itemize}

For multi-hop reasoning tasks like MuSiQue, where the answer requires synthesizing facts from multiple locations, the evidence exhibits strong negative interaction information. Thus:
\begin{align}
    I(C_1; C_2; G) < 0 &\implies \Delta_{\text{frag}} > 0 \nonumber \\
    &\implies U(E) > \hat{U}(E)
\end{align}
This proves that fragmented utility estimation is a \emph{systematic lower bound} on the true utility when evidence is synergistic.
\end{proof}

\paragraph{Consequence for Chunk Size Selection.}
Proposition~\ref{prop:underestimation} rigorously justifies the catastrophic performance collapse at small chunk sizes on multi-hop tasks. The fragmentation gap $\Delta_{\text{frag}}$ is not merely noise but a \emph{structured bias} scaling with inter-chunk synergy. The optimal $S^*$ must satisfy $S \ge E_{\mathcal{T}}$ to ensure synergistic evidence is not partitioned.

\paragraph{Connection to Cognitive Science.}
This parallels the \emph{binding problem} in cognitive neuroscience: a reasoning chain stored in working memory must be represented as a unified chunk to preserve logical coherence. Fragmenting it destroys emergent meaning, analogous to how fragmenting a sentence into words loses compositional semantics.

\section{Theoretical Justification for CPMI vs. Surprisal}
\label{app:cpmi_vs_surprisal}

A natural alternative to our CPMI-based utility is a simpler, non-uniform gating policy based solely on \emph{Surprisal} (or Perplexity), i.e., prioritizing chunks with high $\mathcal{L}_{\text{local}}(x) = -\log P(x \mid x_{\text{local}})$. While intuitively appealing (allocating compute to ``hard'' regions), this approach is mathematically suboptimal for long-context adaptation.

\paragraph{Proposition.} Surprisal measures \emph{difficulty}, whereas CPMI measures \emph{dependency}.

Let the information content of a token $x_t$ be decomposed as:
\begin{equation}
\begin{aligned}
    I(x_t \mid x_{\text{global}}) &= I(x_t \mid x_{\text{local}}) \\
    &\quad - \underbrace{I(x_t ; x_{\text{distant}} \mid x_{\text{local}})}_{\text{CPMI}}
\end{aligned}
\end{equation}
High surprisal ($I(x_t \mid x_{\text{local}})$ is large) can arise from two sources:
\begin{enumerate}
    \item \textbf{Aleatoric Uncertainty:} The token is inherently unpredictable (e.g., a random name or number), regardless of context.
    \item \textbf{Epistemic Uncertainty (Contextual):} The token is predictable given long-range context but unpredictable locally.
\end{enumerate}
Gradient adaptation on Type 1 tokens is wasteful—it forces the model to memorize noise. Adaptation on Type 2 tokens is high-value—it retrieves recoverable information.
CPMI ($|\log P_{\text{full}} - \log P_{\text{local}}|$) specifically isolates Type 2 uncertainty by cancelling out the intrinsic difficulty. A surprisal-based baseline would confuse noise with signal, allocating budget to intrinsically hard but context-irrelevant tokens. Thus, CPMI is the theoretically correct objective for \emph{context-dependent} memory consolidation.

\section{Detailed Related Work}

\paragraph{Context Engineering}
Context engineering optimizes information payloads for large language models, spanning techniques from foundational prompt design to advanced management strategies \citep{Mei2025SurveyContext, mei2025survey, Huang2025DirectedInformation, Hua2025ContextEngineering, Sahoo2024SystematicSurvey, Haider2024PromptingAnd, pang2025large}. Core methodologies include in-context learning and chain-of-thought prompting for adaptive reasoning \citep{Weng2024NavigatingThe, Allingham2023SimpleZero, Brown2020LanguageModelsa, Chen2025UsingLlms, Wei2022ChainThought, Ohalete2025CostarPrompting, ge2025innate}, with recent extensions to multimodal agentic in-context learning \citep{fu2025contextnav}. Recent approaches focus on retrieval-augmented generation (RAG) and automated prompt optimization \citep{Lewis2020RetrievalAugmenteda, Nogueira2019PassageRanking, Shin2020ElicitingKnowledge, Li2025HaystackEngineering, Karpukhin2020DensePassage, chen2025rethinking}, with advanced retrieval systems incorporating utility-based training through shared context attribution \citep{xu2025training} and fine-grained citation generation evaluation \citep{xu2024aliice}. Efficient context management techniques address quadratic scaling limitations through window extension, compression, and hierarchical processing \citep{Li2023CompressingContext, Vaswani2017AttentionAll, Mao2024LiftImproving, Fu2024DataEngineering, Duan2025DocopilotImproving, Zhu2025SkyladderBetter, Tan2024LlocoLearning, Wang2024ContextFormer, Song2024HierarchicalContext, Zhou2024LlmMapreduce, Hou2024EnhancingAnd, Ratner2022ParallelContext, Zhang2024AttentionEntropy, Sun2025SolopoUnlocking, Wu2025TokmemTokenized, Wang2024ToolgenUnified, quancai-etal-2025-discomp}. Parallel efforts tackle robustness and knowledge conflicts via context-aware decoding, representation engineering, and activation alignment \citep{He2024PositionEngineering, He2024ContextSteering, Govindan2025TemporalAlignment, Park2025ContextRobust, Wang2025ContextEngineering, Zhao2024EnhancingContextual, Longpre2021EntityBased, Khandelwal2025CocoaConfidence, Shi2023TrustingYour, Zhou2023ContextFaithful, Zhao2024SteeringKnowledge, Shen2025QwenlongCprs, Porretta2025LatentConvergence, Katrix2025ContextAware, Jukic2025ContextParametrization, Houlsby2019ParameterEfficient, Park2025EmulatingRetrieval, bi2024context, bi2024factuality, bi2025parameters, ge2023attack}. Recent work also reveals that LLMs exhibit systematic biases, such as over-relying on surface-level naming patterns when identifying drug ingredients \citep{li2025newsnowtabletscontain}, and evaluates the impact of watermarking on visual language models in document understanding tasks \citep{xu2025doeswatermarkingaffectvisual}. Safety-critical research addresses fine-grained safe generation via specialized representation routers \citep{mei2024hiddenguard}, distinguishes misalignment from maliciousness in jailbreak scenarios \citep{mei2024not}, investigates jailbreak attacks through implicit references \citep{wu2024you}, adversarial metaphors \citep{yan2025benign}, and prompt template stealing vulnerabilities in text-to-image models \citep{wu2025vulnerability}, while also rethinking jailbreak evaluation to investigate real misuse threats \citep{yan2025confusion} and exploring global subspace approaches for LLM detoxification \citep{duan2026projecting}. Additionally, structured output approaches improve question reformulation and cross-lingual summarization through meta-generation \citep{li2024drs, li2024think}. Research on LLM reasoning capabilities includes new concept comprehension through slang understanding \citep{mei2024slang}, graph descriptive order effects on graph problem solving \citep{ge2024can}, and scalable link prediction with LLMs \citep{bi2024lpnl}. Domain-specific applications demonstrate these methods in tabular analysis, translation, clinical NLP, and traffic prediction \citep{Hollmann2023LargeLanguage, Wu2023ExploringPrompt, Sivarajkumar2024EmpiricalEvaluation, Zheng2024LargeLanguage, Yang2023SupervisedKnowledge}, often utilizing filtering mechanisms to enhance retrieval precision \citep{Shi2025ConceptThan, Cheng2024XragExtreme, Chakraborty2025PrincipledContext}, with emerging work on learning to refine pre-training data at scale \citep{bi2025refinex}.

\paragraph{Memory Management}
Memory management in large language models has evolved to address the critical bottlenecks imposed by model weights and, increasingly, the ephemeral activation memory required for inference. While parametric memory stores implicit knowledge in model weights \citep{Li2025MemosOperating, Zhang2025MemoryLarge, Hsieh2023DistillingStep}, the key-value (KV) cache has emerged as the dominant constraint, often consuming significantly more memory than parameters and scaling linearly with sequence length \citep{Liu2023ScissorhandsExploiting, Zhang2024DiffkvDifferentiateda, Kwon2023EfficientMemoryb, Dai2024CormCache, Cai2025NqkvCachea, Roy2025CarCachea, Shutova2025CacheYoua, Barua2024ExploringAutonomous, Modarressi2023RetLlm, Li2025MemosMemory, Cheng2025EccoImproving, Brown2020LanguageModels, Gao2018LowLatency, Dao2022FlashattentionFast}. To mitigate these limitations, research has introduced virtual memory and paging techniques akin to operating systems \citep{Kwon2023EfficientMemoryb, Xue2024NinjallmFast, Koilia2024HardwareAcceleration, Zhang2025JengaEffective, ChittyVenkata2025PagedevictionStructured}, as well as advanced compression and quantization methods to reduce footprints without quality loss \citep{Xie2025ReimaginingMemory, Srinivas2025ScalingTest, Zhang2024LorcLow}. Dynamic eviction strategies, including heavy-hitter identification and attention-based pruning, selectively discard less critical tokens to maintain fixed budgets \citep{Yuan2025WeightedkvAttentiona, Zhang2023HeavyHitter, Zeng2025LetheLayer, Shinwari2025MemoryAugmenteda, Yi2024MethodFor}. Parallelization and offloading frameworks further optimize resource utilization across devices \citep{Yang2024ProtrainEfficienta, Ren2021ZeroOffload, Sheng2023HighThroughputa, Wu2024LayerCondensed, Banasik2025MemoryAccessa}, while hierarchical and biologically inspired architectures offer structured approaches to long-term memory and personalization \citep{Fang2024UnimemTowards, Rezazadeh2024FromIsolated, Kang2025MemoryAgenta, Hou2024SynapticragEnhancing, Li2024CmtMemory, Metinov2025AdaptiveSoft, Zhong2023MemorybankEnhancing, Huang2024EmotionalRag}. Complementary work on knowledge editing addresses how to efficiently update stored knowledge in model parameters, with methods ranging from adaptive token biasing to structured output acceleration \citep{bi2024adaptive, bi2024struedit, bi2025decoding}, while recent studies reveal that related knowledge perturbation matters when editing multiple pieces of knowledge for the same subject \citep{duan2025related}.

\paragraph{Agent Memory}
Research on agent memory in large language models addresses the constraints of limited context windows and static training data by enabling persistent information storage and retrieval \citep{hu2026memoryageaiagents}. Existing approaches categorize memory into distinct types, differentiating between short-term working memory and long-term storage \citep{Han2024LlmMulti, Wang2025MirixMultia, Sridhar2023CognitiveNeuroscience}, as well as distinguishing parametric memory in model weights from explicit contextual memory \citep{Du2025RethinkingMemorya, Shan2025CognitiveMemory}. More granular taxonomies identify specialized forms such as episodic, consensus, semantic, and procedural memory \citep{Han2024LlmMulti, Terranova2025EvaluatingLong}. Structurally, methods range from knowledge-organization and retrieval-oriented mechanisms to architecture-driven hierarchies \citep{Kang2025MemoryAgent, Zhong2023MemorybankEnhancinga}. While early work relied heavily on Retrieval Augmented Generation (RAG) and static vector databases \citep{Lewis2020RetrievalAugmenteda, Xu2025MemAgentica, Vishwakarma2025CognitiveWeave}, recent advancements explore graph-based systems \citep{Anokhin2024ArigraphLearning, Vishwakarma2025CognitiveWeave} and diverse formats like natural language or embeddings \citep{Wang2023SurveyLarge, Shinn2023ReflexionLanguage}. Contemporary frameworks implement core operations—consolidation, updating, indexing, and forgetting—to manage memory dynamics, often incorporating biologically inspired mechanisms like forgetting curves \citep{Xiong2025HowMemory, Park2023GenerativeAgents, Zhao2023ExpelLlm, Cao2025RememberRefine}. Despite these advances, challenges remain in autonomous memory management, with many systems relying on manual predefinitions \citep{Zhang2025LearnMemorize, Wang2025MemLearninga, Zhang2024SurveyThe} or suffering from structural rigidity and catastrophic forgetting \citep{Xu2025MemAgentica, Zhang2025MemgenWeaving, Guo2023EmpoweringWorking}, motivating meta-evolutionary approaches to dynamically optimize agent memory systems \citep{zhang2025memevolvemetaevolutionagentmemory}. Applications span gaming, dialogue, and procedural tasks \citep{Hu2024SurveyLarge, Zeng2024TheStructural, Hu2024HiagentHierarchical, Fang2025MempExploring, Mohammed2025TowardsStandardized}, with emerging work on environment-free exploration for GUI agents \citep{zheng2025vem}, adaptive workflow optimization via meta-learning \citep{zhu2025adaptflow}, cross-modal knowledge graphs for cost-effective game agents \citep{fu2025vistawise}, reality-aligned evaluation for agentic search \citep{xu2025ravine}, coarse-to-fine frameworks for robust GUI grounding \citep{li2025generalist}, and scaling issue-resolving capabilities across repositories \citep{wang2025swemirrorscalingissueresolvingdatasets}. Future research directions emphasize unified theoretical frameworks \citep{Wu2025FromHuman}, episodic memory for single-shot learning \citep{Pink2025PositionEpisodic}, and scalable, autonomous systems capable of self-evolution \citep{Salama2025MeminsightAutonomous, Zhang2024SurveyThe, Hu2025EvaluatingMemory, Huang2025LicomemoryLightweight}.

\paragraph{Test-Time Adaptation.}
A growing line of work treats deployment as an optimization phase, improving robustness and adaptation by updating model states at inference time under shift or novelty, often via self-supervised objectives and safeguards against forgetting or noisy updates \citep{Niu2022EfficientTMD,Tang2023NeuroModulatedHLA,Park2024T4PTTB,Zhang2024TestTimeTOC}. Within LLMs, inference-time optimization spans gradient-based test-time training on task instances or auxiliary data, including adaptation driven by retrieved neighbors, contextual streams, active/verification-guided sample selection, selective test-time learning for evaluation models, efficient rubric-based generative verification for search-augmented systems, and sample-specific language model optimization \citep{hardt2024testtimetrainingnearestneighbors,muhtar2024streamadapterefficienttesttime,Hbotter2024EfficientlyLAE,huebotter2025efficientlylearningtesttimeactive,Moradi2025ContinuousSOF,Jwa2025BecomingEJC,akyurek2025surprisingeffectivenesstesttimetraining,ma2025efficient,hu2025slotsamplespecificlanguagemodel}. In parallel, test-time compute can be optimized at the policy level—allocating computation across candidate solutions or update actions—through principled test-time scaling, meta-learned compute control, reinforcement learning at inference, or environment-augmented generation that achieves steep scaling laws \citep{Snell2024ScalingLTB,Qu2025OptimizingTCA,Zuo2025TTRLTRD,mei2025a1}. Recent work on reward and rubric-based guidance promotes exploration to improve multi-domain reasoning \citep{bi2025reward}. Recent theoretical work has also drawn connections between LLMs and Solomonoff induction, providing foundations for understanding test-time computation \citep{wan2025large}, investigates latent reasoning in LLMs as vocabulary-space superposition \citep{deng2025latent}, and analyzes circular reasoning patterns that cause self-reinforcing loops in large reasoning models \citep{duan2026circular}. Systematic surveys categorize reasoning LLMs from System 1 to System 2 paradigms \citep{li2025system}. Graph-based analysis provides insights into reasoning patterns of LLMs \citep{xiong2025mapping}, divide-and-conquer strategies enhance character-level manipulation \citep{xiong2025enhancing}, and pattern mining approaches mitigate LLM overthinking via early reasoning exit \citep{wei2025stop}. Audio chain-of-thought reasoning extends these paradigms to multimodal settings \citep{xiong2025thinking}. Complementary inference-time strategies reduce effective context cost by compressing or distilling long inputs and intermediate representations, ranging from selective augmentation/compression in retrieval pipelines to learned and training-free prompt compression, dynamic allocation of soft tokens, activation-based beacons, near-lossless KV compression, and importance sampling-based prompt compression \citep{xu2023recompimprovingretrievalaugmentedlms,chevalier-etal-2023-adapting,jiang2024longllmlinguaacceleratingenhancingllms,pan2024llmlingua2datadistillationefficient,fei2025efficientpromptcompressionevaluator,chen2025dastcontextawarecompressionllms,zhang2024longcontextcompressionactivation,chari2025kvdistillnearlylosslesslearnable,chen2025pis}.

\paragraph{Extended Applications and Broader Techniques}
The core principles underlying context engineering and test-time adaptation extend naturally to diverse application domains. Retrieval-augmented systems face challenges including hidden biases that can undermine performance across modalities \citep{yao2025spotlight}. Research on model robustness reveals vulnerabilities to various input manipulations such as adversarial text arrangements \citep{li2024vulnerability} and disambiguation challenges in recognition tasks \citep{li2025texture}, with studies examining how models handle semantic understanding requiring global reasoning \citep{li-etal-2025-semvink}. Attention-based refinement mechanisms have been applied to structured output generation such as LaTeX conversion \citep{li20252r}, controllable generation via diffusion models \citep{xiong2025unveiling}, and contrastive attention for enhanced reasoning \citep{ge2025focusing}. Interface understanding benefits from test-time scaling approaches \citep{wu2025dimo} and structured attention mechanisms for document processing \citep{liu2025structured}. Temporal reasoning advances through frame-interleaved approaches via reinforcement learning \citep{ge2025framemind}, while domain-specific understanding is enhanced through foundational skill evaluation \citep{wu2025refineshot}. Temporal bias analysis reveals unique challenges in sequential data processing \citep{yao2025not}. Dynamic scene modeling benefits from semantic-guided control \citep{chen2025tokensnodessemanticguidedmotion} and hierarchical flow-guided representations \citep{chen2025haif}. Efficient rendering techniques include frequency-importance methods for real-time processing \citep{chen2024frequency}, non-photorealistic rendering for stylized outputs \citep{hu2024real}, and zero-shot synthesis from textual descriptions \citep{zhang2024translating}. Knowledge distillation enables efficient model deployment for domain-specific tasks \citep{yuyao2022vision}. Signal processing techniques bridge different data modalities through reconstruction \citep{fu2025brainvis}, while generative priors enable quality assessment \citep{fu2024dp}. Long-form content generation extends to creative domains with scaled foundation models \citep{yuan2025yue}, and emerging surveys explore new programming paradigms with LLMs \citep{ge2025survey}.


\section{Datasets}
\label{sec:datasets}

We evaluate \textsc{Gdwm} on two complementary long-context benchmarks: \textbf{ZeroSCROLLS}---a \emph{zero-shot} suite adapted from SCROLLS with reliable, task-specific automatic metrics---and \textbf{LongBench v2}---a \emph{realistic} long-context benchmark using multiple-choice questions for robust evaluation.

\subsection{ZeroSCROLLS}
\label{subsec:zeroscrolls}

\paragraph{Benchmark overview.}
ZeroSCROLLS is a zero-shot benchmark for long-text understanding that contains \emph{no training split} and only small validation sets, with each task capped at 500 examples to keep evaluation affordable.
It extends SCROLLS by adapting six long-document tasks and adding four new tasks, covering summarization, question answering, and information aggregation.

\paragraph{Tasks used in this work.}
Following prior long-context evaluation practice, we select six representative ZeroSCROLLS tasks spanning both \emph{sparse-evidence} and \emph{dense-coverage} regimes:
(i) \textbf{GovReport} and \textbf{QMSum} for (query-based) summarization,
(ii) \textbf{Qasper} and \textbf{NarrativeQA} for long-context QA,
(iii) \textbf{QuALITY} (denoted as \textbf{Quality} in our paper) for multiple-choice comprehension,
and (iv) \textbf{MuSiQue} for multi-hop QA.

\paragraph{Evaluation metrics.}
ZeroSCROLLS uses task-aligned automatic metrics: ROUGE for summarization, F1 for extractive/abstractive QA-style tasks, and Accuracy for multiple-choice QA.
We report the official metrics with the benchmark's evaluation scripts.

\begin{table}[t]
    \centering
    \resizebox{0.5\textwidth}{!}{%
    \begin{tabular}{lccr}
        \toprule
        \textbf{Task} & \textbf{Type} & \textbf{Metric} & \textbf{Avg \#Words} \\
        \midrule
        GovReport  & Summarization & ROUGE    & 7{,}273 \\
        QMSum      & QB-Summ       & ROUGE    & 10{,}839 \\
        Qasper     & QA            & F1       & 3{,}531 \\
        NarrativeQA& QA            & F1       & 49{,}384 \\
        QuALITY (Quality) & MC-QA  & Accuracy & 4{,}248 \\
        MuSiQue    & QA (Multi-hop)& F1       & 1{,}749 \\
        \bottomrule
    \end{tabular}%
    }
    \caption{ZeroSCROLLS tasks used in this work and their official metrics/statistics \citep{shaham2023zeroscrollszeroshotbenchmarklong}.}
    \label{tab:zeroscrolls_tasks}
\end{table}

\subsection{LongBench v2}
\label{subsec:longbenchv2}

\paragraph{Benchmark overview.}
LongBench v2 is a realistic long-context benchmark designed to test \emph{deep understanding and reasoning} over long inputs.
It contains \textbf{503} challenging \textbf{multiple-choice} questions with contexts ranging from \textbf{8k to 2M words}, organized into \textbf{6} major categories and \textbf{20} subtasks.
All examples are in English, and each instance includes a long context, a question, four options, a ground-truth answer, and annotated evidence for verification.
The benchmark emphasizes evaluation \emph{reliability} by using accuracy-based scoring (rather than free-form generation metrics).

\paragraph{Task categories and statistics.}
Table~\ref{tab:longbenchv2_categories} summarizes the six categories and their median context lengths.
Notably, LongBench v2 includes long-dialogue history understanding and code-repository understanding, which directly stress \emph{memory} and \emph{retrieval under long context}.

\begin{table}[t]
    \centering
    \resizebox{0.5\textwidth}{!}{%
    \begin{tabular}{lrr}
        \toprule
        \textbf{Category} & \textbf{\#Questions} & \textbf{Median \#Words} \\
        \midrule
        Single-Document QA                & 175 & 51k \\
        Multi-Document QA                 & 125 & 34k \\
        Long In-context Learning           & 81  & 71k \\
        Long-dialogue History Understanding& 39  & 25k \\
        Code Repository Understanding      & 50  & 167k \\
        Long Structured Data Understanding & 33  & 49k \\
        \bottomrule
    \end{tabular}%
    }
    \caption{LongBench v2 category-level statistics reported in \citet{bai2025longbenchv2deeperunderstanding}.}
    \label{tab:longbenchv2_categories}
\end{table}

\paragraph{Evaluation protocol.}
We follow the official LongBench v2 evaluation protocol and report \textbf{accuracy} (overall and category-wise).
For consistency across settings, all methods are evaluated under the same context-length budget of the underlying base model; when an instance exceeds the model limit, we follow the benchmark-recommended preprocessing/truncation behavior \citep{bai2025longbenchv2deeperunderstanding}.

\end{document}